\theoremstyle{plain}
\newtheorem{theorem}{Theorem}[section]
\newtheorem{proposition}[theorem]{Proposition}
\theoremstyle{definition}
\newtheorem{definition}[theorem]{Definition}
\newtheorem{assumption}[theorem]{Assumption}
\theoremstyle{remark}
\newtheorem{remark}[theorem]{Remark}
\newcommand{\indep}{\perp \!\!\! \perp}
\title{ABC3: Active Bayesian Causal Inference with Cohn Criteria \\in Randomized Experiments}
\author {
    Taehun Cha and
    Donghun Lee\thanks{\; corresponding author}
}
\begin{document}

\maketitle

\begin{abstract}
In causal inference, randomized experiment is a de facto method to overcome various theoretical issues in observational study.
However, the experimental design requires expensive costs, so an efficient experimental design is necessary.
We propose ABC3, a Bayesian active learning policy for causal inference.
We show a policy minimizing an estimation error on conditional average treatment effect is equivalent to minimizing an integrated posterior variance, similar to Cohn criteria \citep{cohn1994active}.
We theoretically prove ABC3 also minimizes an imbalance between the treatment and control groups and the type 1 error probability.
Imbalance-minimizing characteristic is especially notable as several works have emphasized the importance of achieving balance.
Through extensive experiments on real-world data sets, ABC3 achieves the highest efficiency, while empirically showing the theoretical results hold.\footnote{Source codes are available on \url{https://github.com/AIML-K/ActiveBayesianCausal/}. Technical Appendix is on arXiv version.}
\end{abstract}

%

\section{Introduction}

The major goal of causal inference is to estimate the treatment effect which is a relative effect on the treatment group compared to the control group.
In randomized experiments, practitioners allocate treatment to subjects to estimate the treatment effect.
Randomized experiments free practitioners from various theoretical problems prevailing in an observational study, e.g. unmeasured confounders.
However, a randomized experiment is usually more expensive than an observational study, as a result, an efficient experiment design is desirable.

To achieve efficiency in randomized experiments, \citet{efron1971forcing} first introduced a biased-coin design, and several works tried to minimize the estimation bias by achieving a balance between treatment and control groups \citep{atkinson2014selecting}.
\citet{antognini2011covariate}  extended this to covariate-adaptive design to achieve a balance, not only between treatment-control groups but also within sampling strata.
Several recent works target the same goal using an adaptive Neyman allocation \citep{dai2023clip} or Pigeonhole design \citep{zhao2024pigeonhole}.
However, these lines of work assume the experimental subjects are \textit{given}, not actively \textit{choosable}.

Active learning is a framework where a practitioner can choose unlabeled data points and ask an oracle to label them \citep{settles2009active}.
We can further rationalize the experimental design by adopting an active learning framework.
For example, internet companies can choose a member whom they implement the A/B test, by utilizing the personal information they have gathered.
Pharmaceuticals can choose a subject based on their personal information after the applicants are gathered to save a budget.

To develop a sound active learning method for randomized experiments, (1) it should not violate the standard assumptions in causal inference.
Also, (2) the method should achieve a balance between observation and control groups to make a sound conclusion.
We introduce ABC3, a novel active learning policy for randomized experiments to remedy these issues.
Using the Gaussian process, our policy targets minimizing the error of individual treatment effect estimation.
Our contributions are three folds:
\begin{itemize}
    \item We theoretically show minimizing the estimation error on individual treatment effect is equivalent to minimizing the integrated predictive variance in a Bayesian sense.
    \item ABC3, the policy minimizing the variance, theoretically minimizes the imbalance between treatment and control groups and the type 1 error rate.
    \item With extensive experiments, we empirically verify ABC3 outperforms other methods while showing theoretical properties hold.
\end{itemize}

After examining related works in Section 2, we formalize our problem in Section 3.
Then we introduce ABC3 and its theoretical properties in Section 4.
We empirically verify the performance and properties of ABC3 in Section 5.
Then we conclude our paper with several discussions and limitations in Section 6 and 7.

\section{Related Works}

There are several works exploring an active learning policy for observational data (\citet{sundin2019active}, \citet{jesson2021causalbald}, and \citet{toth2022active}).
Especially, \citet{sundin2019active} proposed an active learning policy for decision making, when treatment-control groups are imbalanced.
They theoretically showed the imbalance can result in a type S error, where a practitioner estimates the treatment effect with a different sign.
Likewise, \citet{shalit2017estimating} showed that the generalization error is bounded by the imbalance when estimating the treatment effect.
However, their work focused on obtaining a balanced representation from observational data, not a randomized experiment setting.

For randomized experiments, \citet{deng2011active} suggests an active learning policy sampling a point with the highest predictive variance which is similar to Mackay's criteria \citep{mackay1992information}.
\citet{zhu2024integrating} also proposed Mackay's criteria-like method under network interference structure.
\citet{song2023ace} suggested ACE which targets maximizing the covariance between observed and test data sets.
We will compare the effectiveness of our policy with these policies.

Sample-constrained causal inference setting shares a similar goal with active learning: achieving lower estimation error with fewer samples.
\citet{addanki2022sample} and \citet{ghadiri2023finite} proposed an efficient sampling and estimation method in a randomized experiment setting.
\citet{harshaw2023balancing} suggested a sampling method balancing covariates.
However, their work assumes a linear relationship between covariates and potential outcomes, which is vulnerable in real-world scenarios.
We will also compare the effectiveness of these policies.

\section{Problem Formulation}

Let $X \in \mathcal{X}, Y \in \mathcal{Y}$ and $A \in \{0, 1\}$ be random variables.
$X$ is a covariate representing each subject, $Y$ is an outcome, and $A$ represents a binary treatment.
Following the Neyman-Rubin causal model \citep{rubin1974estimating}, additionally define $Y^0$ and $Y^1$, potential outcomes for either control or treatment.
Unlike the usual supervised learning settings, a practitioner can observe only one of $Y^0$ and $Y^1$ in a causal inference setting.
$x, y, a, y^0$ and $y^1$ denotes the realizations of each random variable.

Let $D_\Omega = \{(x_i, y^0_i, y^1_i)\}_{i=1}^{N}$ be a subject pool with covariate information $x_i$ and potential outcomes $y^0_i, y^1_i  \in \mathbb{R}$.
Let $D_t^0 = \{(x_i, y^0_i)\}_{i \in I_t^0}$ be an observed control group data set at time $t$ with an index set $I_t^0$.
Likewise, define $D_t^1 = \{(x_i, y^1_i)\}_{i \in I_t^1}$ for a treatment group.
Let $X_{\Omega}, X_t^0$ and $X_t^1$ be sets of $x$s in each data set, $D_{\Omega}, D_t^0$ and $D_t^1$.

Our quantity of interest is the \textit{conditional average treatment effect} (\textit{CATE}), $CATE(x) = \mathbb{E} \left[ Y^1 - Y^0 | X=x \right]$ for each subject $x$.
Then we can train an estimator $\hat{CATE}_t(x) = \hat{y}^1_t(x) - \hat{y}^0_t(x)$, where $\hat{y}^a_t$s are regressors trained on each data set $D_t^a$, $a \in \{0, 1\}$.

To evaluate the trained estimator, we use a standard metric, \textit{expected precision in estimation of heterogeneous effect} (\textit{PEHE}, \citet{hill2011bayesian}),

\begin{equation*}
    \epsilon_{PEHE}(\hat{CATE}_t) = \int_\mathcal{X} (\hat{CATE}_t(x) - CATE(x))^2 d\mathbb{P}(x),
\end{equation*}
where $\mathbb{P}$ is a probability distribution over whole covariates.
In the usual case, we can treat $\mathbb{P}$ as a discrete distribution corresponding to the covariates in $D_\Omega$.
Then we can write $\mathbb{P} = \frac{1}{N} \Sigma_{i=1}^{N} \delta_{x_i}$, where $\delta$ is Dirac-delta function.

Meanwhile, in Bayesian statistics, we do not hypothesize the existence of population parameters, e.g. $CATE(x)$.
Instead, we update our belief with observed data points.
To derive a Bayesian policy, we need to define the optimization target in the Bayesian sense.

\begin{definition}
    \begin{equation*}
        \epsilon_{PEHE}^{\Omega}(\hat{CATE}_t) = \int_\mathcal{X} (\hat{CATE}_t(x) - \hat{CATE}_{\Omega}(x))^2 d\mathbb{P}(x)
    \end{equation*}
    where $\hat{CATE}_{\Omega}(x)=\hat{y}^1_{\Omega}(x) - \hat{y}^0_{\Omega}(x)$, $\hat{y}^a_{\Omega}$s are regressors trained on whole data set $D_{\Omega}$.
\end{definition}

Intuitively, $\hat{CATE}_{\Omega}$ represents the best oracle estimation observing all factual and counterfactual data points.
At any $t$, our active learning problem is to choose an optimal (yet unobserved) subject $x^*$ and its treatment $a^*$  which can achieve the lowest expected $\epsilon_{PEHE}^{\Omega}(\hat{CATE}_{t+1})$, i.e.

\begin{align*}
    x^*, a^* = & \text{arg min}_{x_{t+1} \in X_{\Omega} \setminus (X^1_t \bigcup X^0_t), a_{t+1} \in \{0, 1\}} \\ 
    & \mathbb{E}_{t+1} \left[\epsilon_{PEHE}^{\Omega}(\hat{CATE}_{t+1})\right],
\end{align*}
where $\mathbb{E}_t \left[ \cdot \right] = \mathbb{E} \left[ \cdot | \mathcal{F}_t \right]$, $\mathcal{F}_t$ is a filtration at $t$.
If $x^* = x_j$ and $a^*=a$, we observe $y^a_j$ and append it to the observed data set $D^a_{t+1} = D^a_t \bigcup \{(x_j, y^a_j)\}$ to be used at $t+1$.

While optimizing the expected error, an active learning policy should not violate the standard assumptions for causal inference.
A standard randomized experiment in causal inference requires several assumptions to identify the causal effects \citep{rubin1974estimating}.

\begin{assumption} (Consistency)
    $Y = AY^1 + (1-A)Y^0$
\end{assumption}

\begin{assumption} (Positivity)
    $\mathbb{P}(A=a|X=x) > 0, \forall x$
\end{assumption}

\begin{assumption} (Randomization)
    $ (Y^0, Y^1) \indep A | X$
\end{assumption}

Assumption 3.2. assumes that the observed outcome $Y$ under treatment $A$ is equivalent to its potential outcome $Y^a$.
Assumption 3.3. is required to well-define the conditional expectation.
Assumption 3.4. implies that \textbf{the treatment $A$ should be assigned independently from the potential outcome values}.
The assumptions guarantee an unbiased estimation of CATE, i.e. $CATE(x)=\mathbb{E} \left[ Y^1 - Y^0 | X=x \right]= \mathbb{E} \left[ \mathbb{E} \left[ Y | A=1, X=x \right] - \mathbb{E} \left[ Y | A=0, X=x \right] \right]$.

Assumption 3.4. is crucial in an active learning setting since several active learning algorithms consider $y$ values when querying $x$ and $a$.
For instance, \citet{song2023ace} introduced ACE-UCB, a bandit-like policy that targets a subject with the highest individual treatment effect.
However, ACE-UCB exploits previously observed outcomes when choosing a subject and treatment.
As a result, ACE-UCB can generate confounding between treatment and potential outcomes and may violate Assumption 3.4.
So querying $x$ and $a$ without considering $y$s is crucial when adopting an active learning framework for a causal inference.

\section{Proposed Method}

\subsection{ABC3: Active Bayesian Causal Inference with Cohn Criteria}

Gaussian process ($\mathcal{GP}$, \citet{rasmussen2006gaussian}) is a non-parametric machine learning model based on Bayesian statistics.
It is a powerful tool as it allows flexible function estimation depending on a pre-defined kernel function.
Set priors on $f \sim \mathcal{GP}(m(x), k(x,x'))$, where $m(x)$ is a mean prior and $k$ is a kernel function.
Assume a data set $D=\{(x_i, y_i)\}_{i=1}^N$ with noisy observation $y_i=Y(x_i)+\epsilon_i$, $\epsilon_i \sim \mathcal{N}(0, \sigma^2_\epsilon)$.
We can obtain a posterior distribution of $f(x^*)$ given data set $D$ by computing $f(x^*) | D$ as
\begin{align*}
    & f(x^*) | D \sim \mathcal{N}(m'(x^*), \sigma^2(x^*)) \text{ where}\\
    & m'(x^*) = m(x^*) + \mathbf{k}_*(x^*)^T \left[ \mathbb{K} + \sigma^2_\epsilon \mathbf{I} \right]^{-1} \mathbf{y}, \\
    & \text{cov}(x, x^*) = k(x, x^*) - \mathbf{k}_*(x)^T \left[ \mathbb{K} + \sigma^2_\epsilon \mathbf{I} \right]^{-1} \mathbf{k}_*(x^*), \\
    & \sigma^2(x^*) = \text{cov}(x^*, x^*), \\
    & \mathbf{k}_*(x^*) = \left[ k(x_i, x^*) \right]_{i=1}^N, \mathbb{K} = \left[ k(x_i, x_j) \right]_{i,j=1}^N, \\
    & \mathbf{y} = \left[ y_i \right]_{i=1}^N.
\end{align*}

We can observe that the posterior variance $\sigma^2(x^*)$ does not depend on $\mathbf{y}$.
Adopting the notations from above, we define $\mathbf{k}_{t,*}^a, \mathbb{K}^a_t, \mathbf{y}^a_t$ and $\text{cov}^a_t$ with observed data set $D^a_t$ at time $t$.
We also assume zero-mean prior, i.e. $m(x)=0, \forall x$.

The following theorem identifies our active learning policy with only posterior variance terms.

\begin{theorem}
    Assume $|k(x, x')| < \infty$ and $|y^a_i| < \infty$ for all $x, x' \in \mathcal{X}$ and $a, i$, as a result, $\epsilon^{\Omega}_{PEHE}(\hat{CATE}_t) < \infty, \forall t$.
    Let our estimator $\hat{CATE}_t(x) = \hat{y}^1_t(x) - \hat{y}^0_t(x)$, where $\hat{y}^a_t(x)=\mathbb{E}_t\left[ Y^a(x) \right]$ is a mean posterior distribution of gaussian process $Y^a$ trained on data set $D_t^a$. 
    Assume two Gaussian processes $Y^1$ and $Y^0$ are independent.
    Then 
    \begin{align*}
     \text{arg min}_{x_{t+1},a_{t+1}} & \mathbb{E}_{t+1} \left[\epsilon_{PEHE}^{\Omega} (\hat{CATE}_{t+1})\right] =\\
    \text{arg min}_{x_{t+1},a_{t+1}} & \int_\mathcal{X} \mathbb{V}_{t+1} \left[ Y^1(x) \right] + \mathbb{V}_{t+1} \left[Y^0(x) \right] d\mathbb{P}(x)
    \end{align*}
\end{theorem}

Proof of Theorem 4.1. is in Appendix A.1.
Theorem 4.1. states that minimizing the error on CATE estimation is equivalent to minimizing the integrated posterior variance of the estimator.
In active learning literature, the active learning policy minimizing the integrated predictive variance is called \textit{Active Learning Cohn} (\citet{cohn1994active} and \citet{gramacy2020surrogates}).
\citet{seo2000gaussian} proposed Active Learning Cohn policy utilizing the Gaussian process in a usual supervised learning setting.
We extend this line of work to causal inference and name our policy after his name, \textbf{ABC3: Active Bayesian Causal Inference with Cohn Criteria}.




Computing the inverse of all hypothetical covariance matrix $\mathbb{K}^a_{t+1}$ for all $x$ is computationally infeasible.
However, we can efficiently find the minimizer as $\mathbb{K}^a_t$ is a principal submatrix of $\mathbb{K}^a_{t+1}$.

\begin{proposition}
    \begin{align*}
        & \text{arg min}_{x_{t+1},a_{t+1}} \int_\mathcal{X} \mathbb{V}_{t+1} \left[ Y^1(x) \right] + \mathbb{V}_{t+1} \left[ Y^0(x) \right] d\mathbb{P}(x) \\
        &= \text{arg max}_{x_{t+1},a}
    \end{align*}
    \begin{align}
        & \frac{ \int_\mathcal{X} \left[ (\mathbf{\tilde{k}}^a_{t+1})^T \left[ \mathbb{K}^a_t + \sigma^2_\epsilon \mathbf{I} \right]^{-1} \mathbf{k}^a_{t,*}(x) - k(x_{t+1}, x)\right]^2 d\mathbb{P}(x) }{k(x_{t+1},x_{t+1}) + \sigma^2_\epsilon - (\mathbf{\tilde{k}}^a_{t+1})^T  \left[ \mathbb{K}^a_t + \sigma^2_\epsilon \mathbf{I} \right]^{-1} \mathbf{\tilde{k}}^a_{t+1}}
    \end{align}
    where $\mathbf{\tilde{k}}^a_{t+1} = \left[k(x_i, x_{t+1})\right]_{i \in I^a_t}$
\end{proposition}

Proof of Proposition 4.2. is in Appendix A.2.
From the proposition, we successfully eliminate the dependency on $\mathbb{K}^a_{t+1}$, and there is no need to compute an inverse matrix for every $x$.
We summarize our policy in Algorithm 1.

\begin{algorithm}[h]
\caption{ABC3}
\textbf{Input}: Current time step $t$, whole covariate set $X_{\Omega}$, covariates distribution $\mathbb{P}$, previous observations $X^1_t$ and $X^0_t$, kernel $k$, noise parameter $\sigma_\epsilon$\\
\textbf{Output}: $x_{t+1}, a_{t+1}$
\begin{algorithmic}[1] 
\STATE $\mathcal{V}_0, \mathcal{V}_1 = \phi, \phi$
\FOR{$x \in X_{\Omega} \setminus (X^0_t \bigcup X^1_t)$}
\STATE Compute $\mathbf{\tilde{k}}^1_{t+1}$ and $\mathbf{\tilde{k}}^0_{t+1}$ assuming $x_{t+1}= x$
\STATE $v^0, v^1=$ Equation (1) for each $a \in \{0,1\}$
\STATE $\mathcal{V}_0 = \mathcal{V}_0 \bigcup \{v^0\}, \mathcal{V}_1 = \mathcal{V}_1 \bigcup \{v^1\}$
\ENDFOR
\STATE $i, a_{t+1} = \text{arg max} (\mathcal{V}_0 || \mathcal{V}_1)$
\STATE $x_{t+1} = X_{\Omega} \setminus (X_t^0 \bigcup X_t^1)\left[ i \right]$
\STATE \textbf{return} $x_{t+1}, a_{t+1}$
\end{algorithmic}
\end{algorithm}

\subsection{Theoretical Analysis}

\subsubsection{Balancing Treatment-Control Groups}

Unlike usual supervised learning, causal inference requires precise estimation of both functions for treatment and control groups.
As a result, the balance between the two groups is crucial to obtain a sound estimation.
For example, consider a study on the causal effect of online lectures.
Assume our treatment group is concentrated on undergraduate students while our control group is concentrated on graduate students.
Then it would be difficult to make a sound conclusion with statistical tools, as the two groups are highly imbalanced.

Several researchers theoretically analyzed the effect of the imbalance on causal inference.
\citet{shalit2017estimating} showed that the generalization error on CATE is upper bounded by the imbalance.
\citet{sundin2019active} defined Type S Error, assigning a different sign (+ or -) to CATE, and showed the probability of Type S Error is bounded by the imbalance.
Both works utilized \textit{Maximum Mean Discrepancy} (MMD, \citet{gretton2012kernel}) to quantify and measure the imbalance.

\begin{definition}
    \begin{align*}
        \text{MMD}(P, Q,\mathcal{F}) &= \\
        &\text{sup}_{f \in \mathcal{F}} \mathbb{E}_{x \sim P(x)} \left[ f(x) \right] - \mathbb{E}_{y \sim Q(y)} \left[ f(y) \right]
    \end{align*}
    where $\mathcal{F}$ is a unit ball of Reproducing Kernel Hilbert Space (RKHS) induced by a kernel $k(\cdot, \cdot)$.
\end{definition}

\citet{gretton2012kernel} showed that the MMD is equivalent to the distance between mean embeddings in RKHS, $\mu_P$ and $\mu_Q$, and can be computed with the kernel function $k$.

\begin{remark}
    \begin{align*}
        \text{MMD}(P, Q, \mathcal{F})^2 &= || \mu_P - \mu_Q ||^2 \\
        = \mathbb{E}_{x,x' \sim P(x)} &\left[ k(x, x') \right] + \mathbb{E}_{y,y' \sim Q(y)} \left[ k(y, y') \right] \\
        & - 2 \mathbb{E}_{x \sim P(x), y \sim Q(y)} \left[ k(x, y) \right]
    \end{align*}
\end{remark}

We show our active learning policy approximately minimizes the upper bound of MMD.

Assume the probability over whole covariates is a discrete distribution corresponding to the covariates in $D_\Omega$, i.e. $\mathbb{P} = \frac{1}{N} \Sigma_{i=1}^{N} \delta_{x_i}$.
For notational convenience, we assume the noise-less observation case $\sigma^2_{\epsilon}=0$, but we can easily extend the result to the noisy case.
Let $\mathbb{P}_t^a = \frac{1}{|I_t^a|} \Sigma_{i \in I_t^a} \delta_{x_i},  a \in \{0, 1\}$ be an empirical distribution for treatment-control group up to time $t$.
Then we obtain the following theorem.

\begin{theorem} 
    Assume $|k(x, x')| < \infty$ and $|y^a_i| < \infty$ for all $x, x' \in \mathcal{X}$ and $a, i$. 
    Let $\lambda^*$ be a maximum eigenvalue of $\mathbb{K}_{\Omega}$, i.e. covariance matrix of whole covariates.
    Let $M = \frac{1}{N^2} \Sigma_{i,j=1}^{N} k(x_i,x_j)$.
    Define functions $\delta^*(I_n)=\frac{1}{n} \Sigma_{i \in I_n} \int_{\mathcal{X}} k(x_i, x) d\mathbb{P}(x)$, and $\epsilon^*(I_n) = M - \frac{1}{n^2} \Sigma_{i,j \in I_n} k(x_i, x_j)$, where $I_n \subset \{1,...,N\}$ is an n-element index set.
    Assume  $\epsilon^*(I_n) \leq 2 \delta^*(I_n), \forall I_n$.
    Then
    \begin{align*}
        MMD(\mathbb{P}^1_t, & \mathbb{P}^0_t, \mathcal{F})^2 \leq  4 \frac{\lambda^*}{|I^1_t|} + 4 \frac{\lambda^*}{|I^0_t|} \\
        & + 2 \int_\mathcal{X} \mathbb{V}_t \left[ Y^1(x) \right] + \mathbb{V}_t \left[ Y^0(x) \right] d\mathbb{P}(x)
    \end{align*}
\end{theorem}

Proof of Theorem 4.5. is in Appendix A.3.
We can observe that the first two terms decrease as we observe more subjects.
The third term is our exact optimization target as introduced in Theorem 4.1.
Empirical achievability and intuitive meaning of the assumption are covered in Section 5.6 and Appendix B.

By combining this result with the previous theoretical analysis (\citet{shalit2017estimating} and \citet{sundin2019active}), our active learning policy minimizes the upper bounds of both generalization error and type S error.

\subsubsection{Type 1 Error Minimization}

The precise estimation of causal effect is an important object of causal inference.
However, \textbf{testing the existence of causal effect} is another key component.
R. A. Fisher first advocated the randomized experiment to test the existence of causal effects \citep{fisher1970statistical}.
He proposed \textit{Fisher's sharp null hypothesis}, where $y^1_i=y^0_i, \forall i=1, ... ,N$.
A randomized experiment method should not reject the null hypothesis if it finds no causal effect.
Type 1 error occurs when we reject a null hypothesis while it is true.

As active learning is a sequential procedure, a practitioner may want to test the statistical significance of the treatment effect sequentially.
However, \citet{ham2023design} stated that applying statistical tests sequentially can result in a high type 1 error probability.
An active learning policy should avoid the type 1 error to make a sound conclusion.

Here we present our theoretical result that our policy minimizes the upper bound of the type 1 error probability.
First, we define the type 1 error.

\begin{definition} (Type 1 Error at $t$) 
    Under Fisher's Sharp null hypothesis, as it implies $\text{CATE}(x)=0, \forall x$,
    \begin{equation*}
        \mathbb{P}_t\left[ \text{Type 1 Error}(x) \right] = \mathbb{P}_t\left[ |Y^1(x) - Y^0(x)| > \alpha \right],
    \end{equation*}
    where $\alpha$ is a decision threshold.
\end{definition}

Then we can show that our policy minimizes the upper bound of the type 1 error rate over the whole $x$.
Proof of Theorem 4.7. is in Appendix A.4.

\begin{theorem}
    Under Fisher's Sharp null hypothesis, $\text{arg min}_{x_{t+1}, a_{t+1}} \int_\mathcal{X} \mathbb{V}_{t+1} \left[ Y^1(x) \right] + \mathbb{V}_{t+1} \left[ Y^0(x) \right] d\mathbb{P}(x)$ also minimizes the upper bound of the $\int_{\mathcal{X}} \mathbb{P}_{t+1} \left[ \text{Type 1 Error}(x) \right] d\mathbb{P}(x)$
\end{theorem}

\section{Experiments}
In this section, we empirically analyze the theoretical results introduced in Section 4.
For the comparison, we utilize \textbf{IHDP} (\citet{brooks1992effects} and \citet{hill2011bayesian}), \textbf{Boston} \citep{harrison197881hedonic}, \textbf{ACIC} \citep{gruber2019acic}, and \textbf{Lalonde} \citep{lalonde1986evaluating} data sets.
IHDP and ACIC data sets are semi-real data sets where counterfactual outcomes are simulated.
Boston and Lalonde data sets do not contain counterfactual outcomes.
Following \citet{addanki2022sample}, we set $Y^0=Y^1$ to simulate the null hypothesis circumstance (i.e. $CATE(x)=0, \forall x$).
All data sets have continuous outcome values as we assume the potential outcome follows a Gaussian process.
More detailed explanations of data sets are in Appendix C.



We randomly divide each data set in half for every trial to construct train and test data sets.
Each active learning policy selects what to observe from the train data set and regressors are trained on the observed train data set.
Policies are evaluated on the test data set for each pre-defined time step.
We utilize the following baseline policies.

\begin{figure*}[t]
    \centering
    \includegraphics[width=0.4\linewidth]{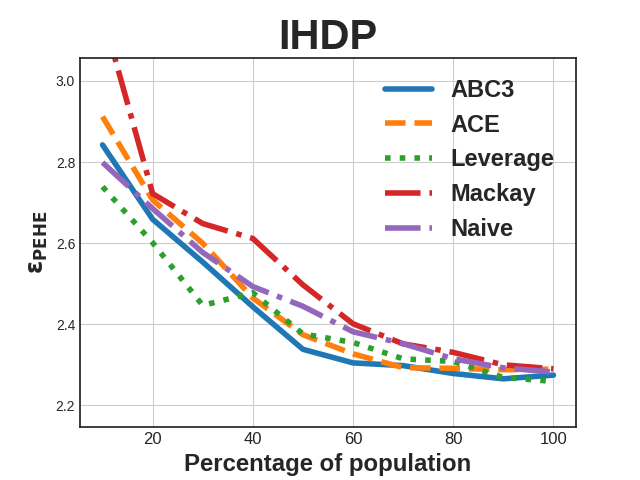}
    \includegraphics[width=0.4\linewidth]{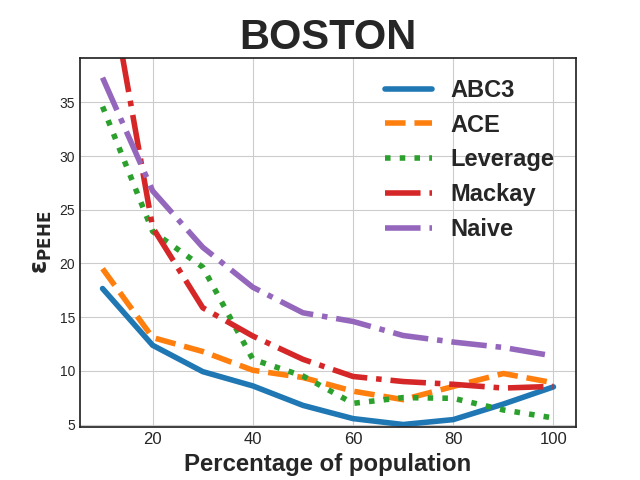}
    \includegraphics[width=0.4\linewidth]{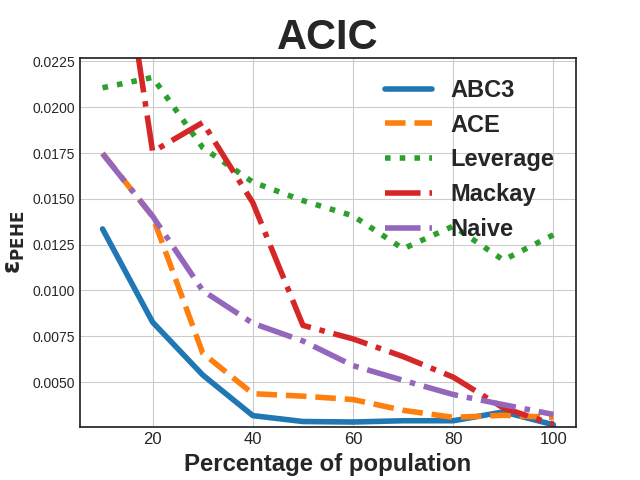}
    \includegraphics[width=0.4\linewidth]{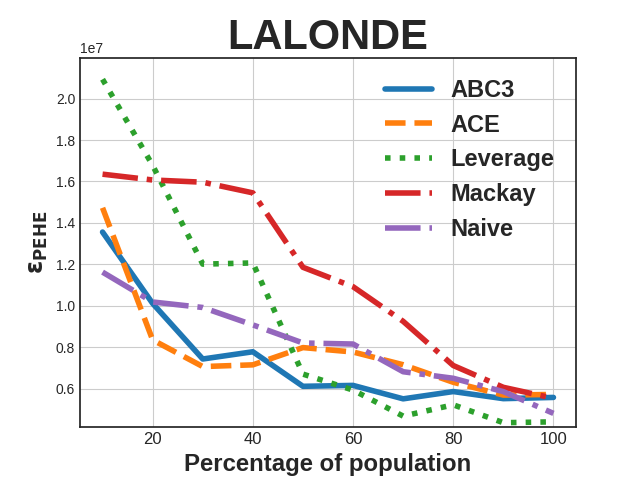}
    \caption{Mean of $\epsilon_{PEHE}$. $x$-axis represents the observed percentage of the population. We measure $\epsilon_{PEHE}$ for every 10\% observation.}
\end{figure*}

\begin{itemize}
    \item \textbf{Naive}: This policy randomly selects $x_{t+1}$ from the subject pool and then decides treatment $a_{t+1}$ with Bernoulli distribution with probability 0.5.
    \item \textbf{Mackay} \citep{mackay1992information}: This policy selects $x_{t+1}, a_{t+1} = \text{arg max}_{x, a} \mathbb{V}_t \left[ Y^a(x) \right]$, i.e. chooses the most uncertain point at $t$.
    \item \textbf{Leverage} \citep{addanki2022sample}: Under linearity assumption between covariates and outcomes, this policy exploits leverage score and shows theoretical guarantees on nearly optimal RMSE on CATE estimation. Unlike other policies, this policy is not sequential, as a result, the observed points for $t$ and $t+1$ can be different.
    \item \textbf{ACE} \citep{song2023ace}: Unlike other policies, this policy assumes access to the test data set and select $x_{t+1}, a_{t+1} = \text{arg max}_{x,a} \left[ \frac{1}{|n_{test}|} \Sigma_{i=1}^{|n_{test}|} \text{cov}^a_t\left[ x^{test}_i, x \right] \right]^2 / \mathbb{V}_t \left[ Y^a(x) \right]$. It maximizes the covariance between observed and test data sets while minimizing the predictive variance.
\end{itemize}

After selecting $x_{t+1}$ and $a_{t+1}$, all policies fit the Gaussian process for regression.
We apply feature-wise normalization and $y$-standardization for all regressors. (except Leverage, which requires item-wise normalization)
We fit two Gaussian process models with Constant Kernel * Radial Basis Function (RBF) Kernel + White Kernel.
We optimize the kernel hyperparameters using \textit{scikit-learn} package.

All the uncertainty-aware policies (ABC3, Mackay and ACE) use the Gaussian process to quantify the uncertainty.
For the uncertainty-quantifying kernels, we use RBF kernel with length scale 1.0 with $\sigma^2_{\epsilon}=1$.
(We check the hyperparameter sensitivity in Section 5.4.)

\subsection{Minimizing Error}

We measure the $\epsilon_{PEHE}$ (not $\epsilon_{PEHE}^{\Omega}$) when observing every 10\% of population.
We run 100 experiments for every data set.
We mark the mean of measured $\epsilon_{PEHE}$.
The results are in Figure 1.
Appendix D presents the standard deviation of measured $\epsilon_{PEHE}$.

ABC3 shows the best performance, i.e. the lowest $\epsilon_{PEHE}$ for most time steps.
We can verify ABC3 succesfully minimizes the population $\epsilon_{PEHE}$, though optimization target of ABC3 is $\epsilon_{PEHE}^{\Omega}$.
In most cases, when ABC3 observes only half of the population, it achieves $\epsilon_{PEHE}$ level which is achieved with full observation by other policies.
Especially for Boston, after 20\%, ABC3 achieves $\epsilon_{PEHE}$ which Naive policy cannot achieve even with full observation.


ACE policy shows comparable results to ABC3.
ACE slightly outperforms ABC3 for a 20-40\% interval of Lalonde data set.
However, ABC3 outperforms ACE in most cases, though ABC3 has no access to the test data set, unlike ACE.

Leverage temporarily outperforms ABC3 for the beginning part of IHDP and the last part of Lalonde.
However, for ACIC, Leverage significantly underperforms other policies.
The result may imply the vulnerability of linearity assumption in real-world data sets.

Mackay underperforms even Naive policy most times.
It is interesting as Mackay utilizes the same uncertainty information as ABC3.
The result may imply the importance of \textit{proper} utilization of the same information.

In summary, ABC3 is a promising Bayesian active learning policy, which efficiently and robustly achieves the best performance among the others.

\subsection{Balancing Treatment-Control Groups}

Theorem 4.5 states that ABC3 theoretically minimizes the maximum mean discrepancy between treatment and control groups.
A policy can benefit by minimizing MMD from several theoretical aspects, as introduced in Section 4.2.
We empirically verify the property.
We measure MMD between $\mathbb{P}^1_t$ and $\mathbb{P}^0_t$ for every 10\% observation and applied the same setting as the previous section.
We report the mean and standard deviation in Figure 2.

\begin{figure}[th]
    \centering
    \includegraphics[width=0.24\linewidth]{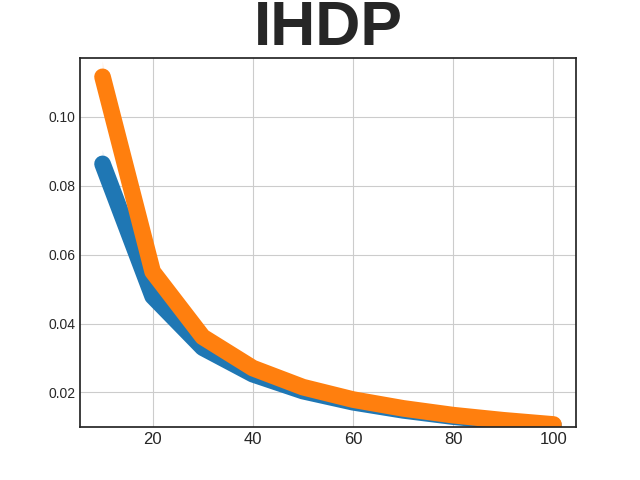}
    \includegraphics[width=0.24\linewidth]{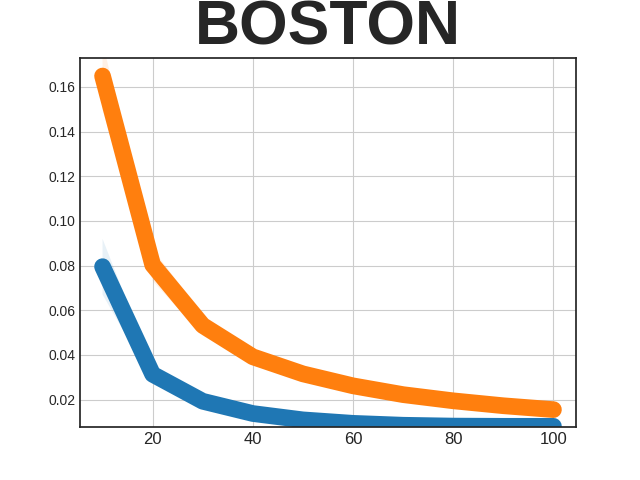}
    \includegraphics[width=0.24\linewidth]{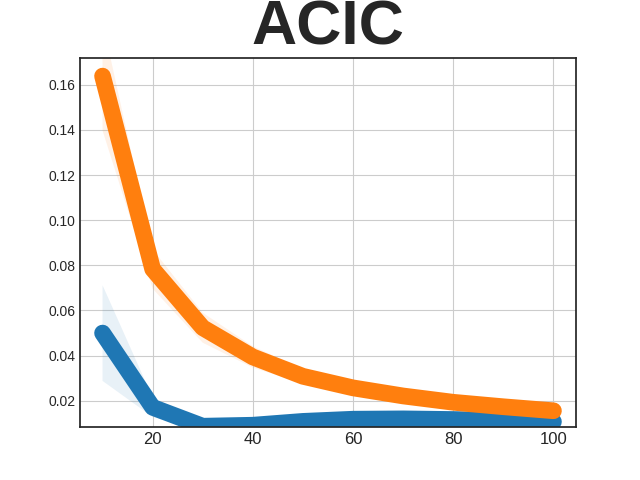}
    \includegraphics[width=0.24\linewidth]{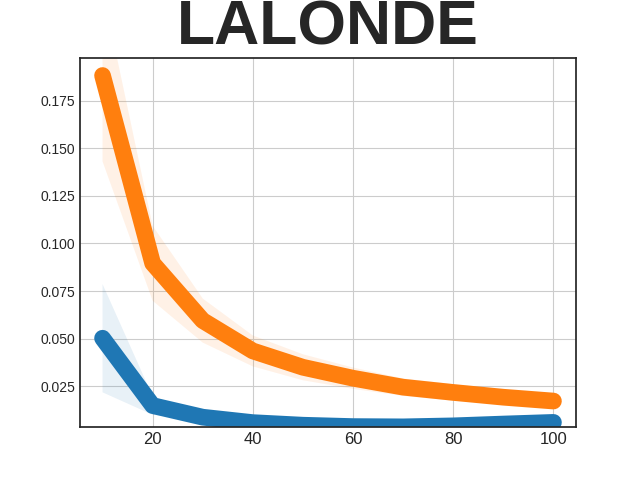}
    \caption{Mean and standard deviation of MMD between observed treatment and control groups, $\mathbb{P}^1_t$ and $\mathbb{P}^0_t$. The blue line is for ABC3, and the orange line is for Naive policy. $x$-axis is for the sampled ratio and $y$-axis is MMD.}
\end{figure}

ABC3 achieves substantially lower MMD compared to Naive policy on all data sets.
As noted in Theorem 4.5, the upper bound of MMD is minimized as the number of observations increases.
As a result, Naive policy also shows a decrease in MMD as time proceeds.
However, the MMD gap between ABC3 and Naive is significant at the beginning stage of experiments.
The gap is especially large for ACIC and Lalonde data sets.
The result empirically supports Theorem 4.5. holds, and ABC3 achieves a balance between treatment-control groups.

\subsection{Minimizing Type 1 Error}

Theorem 4.7. states ABC3 minimizes the upper bound of the integrated type 1 error probability, $\int_{\mathcal{X}} \mathbb{P}_{t+1} \left[ \text{Type 1 Error}(x) \right] d\mathbb{P}(x)$.
Here we verify the property empirically.

We use Boston and Lalonde data sets which assume no treatment effect, i.e. $Y^0=Y^1$.
To measure the type 1 error rate, we compute a mean and standard deviation of $\hat{CATE}(x)$ for every $x$ in the test data set.
Then we implement the Z test with a significance level of 5\%. (i.e. $\alpha=1.96$).
Type 1 error occurs when the absolute value of the Z-statistic is bigger than $\alpha$.
We compute the percentage of $x$ where the type 1 error occurs.
The result is in Figure 3.

\begin{figure}[th]
    \centering
    \includegraphics[width=0.495\linewidth]{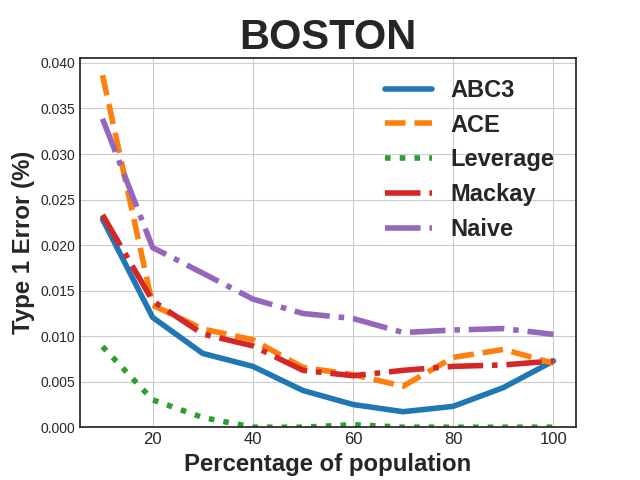}
    \includegraphics[width=0.495\linewidth]{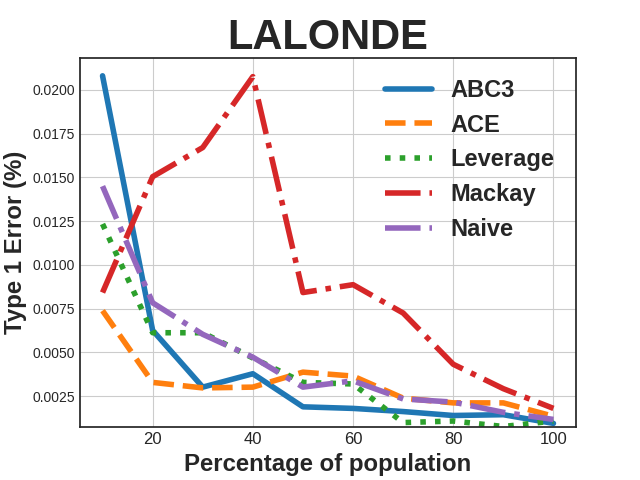}
    \caption{Mean of the type 1 error rate.}
\end{figure}

As time proceeds, ABC3 achieves a lower type 1 error rate, as expected in Theorem 4.7.
ABC3 consistently shows a lower Type 1 error rate than Naive.
However, Mackay shows the highest error rate on Lalonde.
The result may again imply the importance of \textit{proper} utilization of the uncertainty information.

Meanwhile, Leverage shows a significantly lower error rate for Boston.
This aspect was not presented in the original paper \citep{addanki2022sample}.
The result may imply the strong linearity in Boston data set and the power of the method when the linearity assumption holds.

\subsection{Hyperparameter Sensitivity}

To implement ABC3, we need to determine uncertainty-quantifying kernel, kernel parameters, and observation error $\sigma^a_{\epsilon}$ as hyperparameters.
As usual machine learning models utilizing the kernel method, selecting an appropriate kernel and parameters is crucial for obtaining precise estimation.
We present hyperparameter sensitivity analysis for ABC3.

For the kernel, we test \textbf{RBF} kernel (utilized throughout this paper), \textbf{Matern} kernel, and Exp-Sine-Squared kernel (\textbf{Sine}).
RBF kernel has one parameter, lengthscale, which determines how `local' the output function would be.
The smaller the lengthscale, the more local and wiggler the resulting function is.
Matern kernel is a generalization of RBF kernel and has two parameters, lengthscale, and smoothness.
Sine kernel assumes that our data shows a periodic pattern.
it has two parameters, lengthscale, and periodicity.
Here we test only lengthscale, with setting periodicity as 1.
We present $\epsilon_{PEHE}$ for each setting in Figure 4.

\begin{figure}[th]
    \centering
    \includegraphics[width=0.495\linewidth]{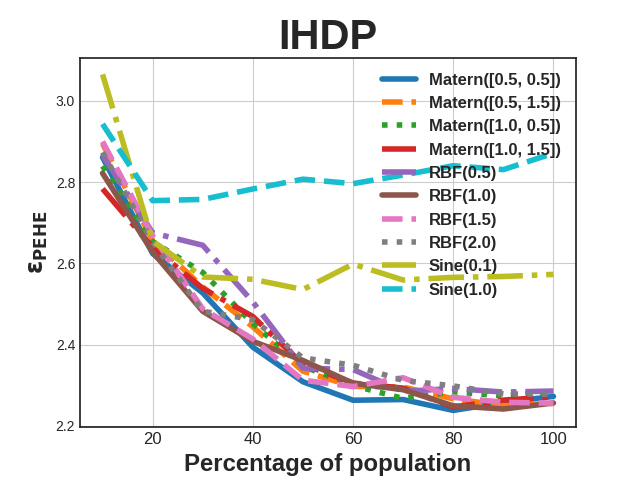}
    \includegraphics[width=0.495\linewidth]{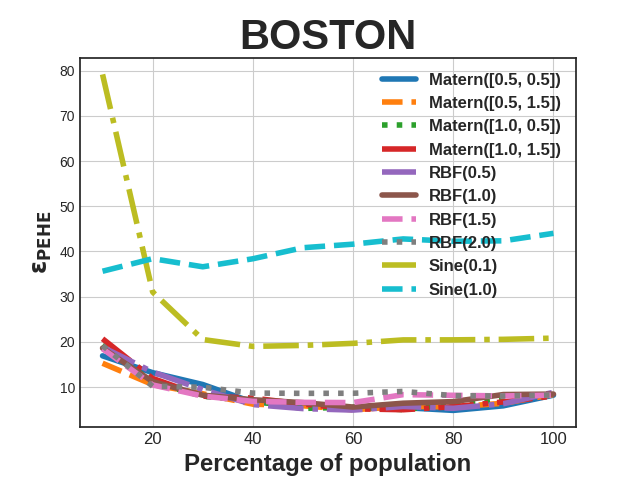}
    \includegraphics[width=0.495\linewidth]{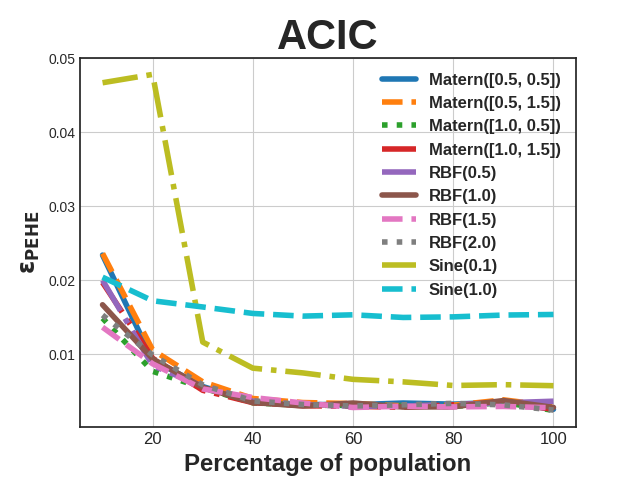}
    \includegraphics[width=0.495\linewidth]{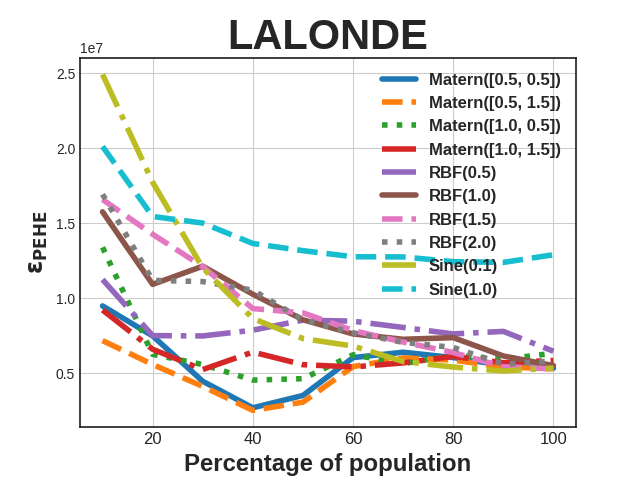}
    \caption{$\epsilon_{PEHE}$ for every kernel and kernel parameter. The numbers in parenthesis are kernel parameters.}
\end{figure}

For IHDP, ACIC, and Boston, Matern and RBF kernels show robust performance along different kernel parameters.
However, Sine kernel significantly deteriorates in the three data sets.
The result implies no (or week) periodicity in the three data sets.
Meanwhile, Sine kernel with a length scale of 0.1 shows the best performance on Lalonde data set.
It may imply a periodicity in the Lalonde data set.
The result gives a similar lesson: selecting an appropriate kernel and parameters is crucial for obtaining precise estimation.
Overall, RBF and Matern kernels with `reasonable' parameters are safe options for general data sets.

\begin{figure}[h]
    \centering
    \includegraphics[width=0.495\linewidth]{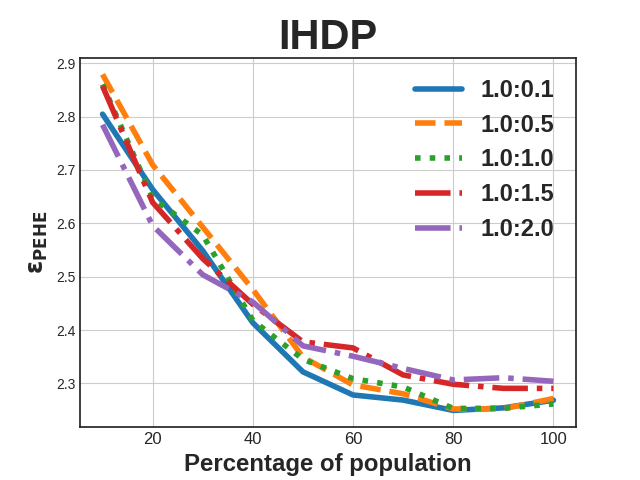}
    \includegraphics[width=0.495\linewidth]{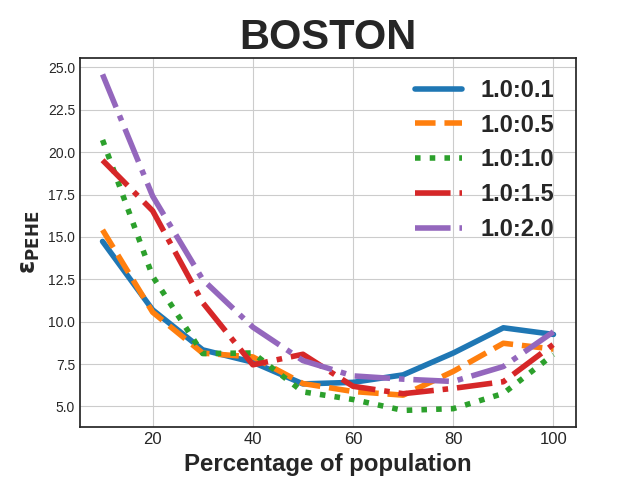}
    \includegraphics[width=0.495\linewidth]{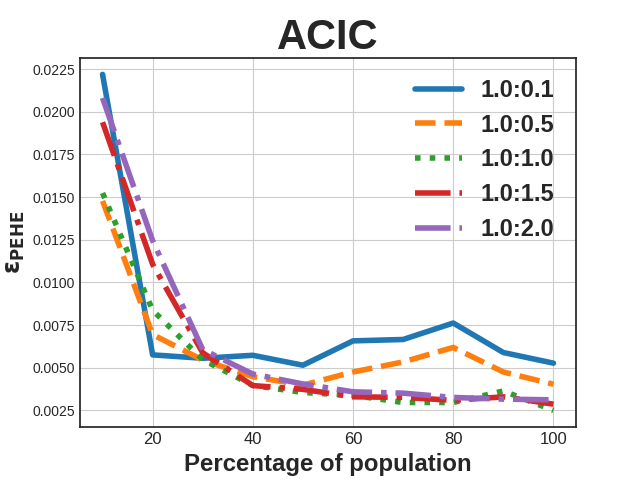}
    \includegraphics[width=0.495\linewidth]{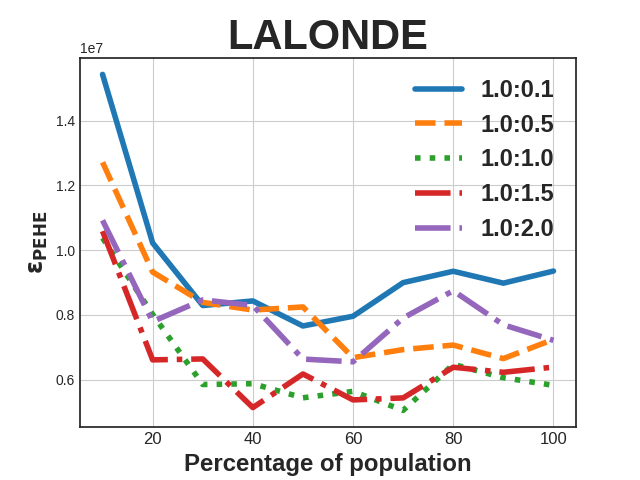}
    \caption{$\epsilon_{PEHE}$ for different $\sigma^0_{\epsilon}:\sigma^1_{\epsilon}$.}
\end{figure}

We also test the hyperparameter sensitivity to $\sigma^a_{\epsilon}$.
The result is in Figure 5.
Except the extreme case ($\sigma^0_{\epsilon}:\sigma^1_{\epsilon}=1.0:0.1$), ABC3 shows robust performance for different $\sigma^a_{\epsilon}$s.
Interestingly, $1.0:0.1$ shows the best performance for IHDP data set, which may imply the noisiness in the control group.
However, $1.0:0.1$ significantly deteriorates on Lalonde data set.
Overall, the equal noise setting ($1.0:1.0$), utilized throughout this paper, is not always the best, but a generally safe choice.

\subsection{Measuring Computation Time}

Here we present the time to sample the whole train data set of Boston data.
We compute the mean and standard deviation of the computation time by iterating 10 times.
As a result, Leverage shows nearly the same computation time as Naive, while ABC3 shows a time comparable to Mackay.
ACE requires nearly twice as much time than ABC3.
However, most policies require less than 1 second to sample the whole data set.
The result shows that ABC3 is a feasible policy with reasonable computation time.

\begin{table}[h]
    \centering
    \begin{tabular}{cccc|c}
        \toprule
         Naive & Leverage & Mackay & ACE & ABC3 \\
         \midrule
         0.2567 & 0.2517 & 0.7595 & 1.6506 & 0.8871 \\
         (0.0321) & (0.0212) & (0.3803) & (0.7961) & (0.0373) \\
         \bottomrule
    \end{tabular}
    \caption{Mean and standard deviation of the computation time (in second) on Boston data set.}
    \label{tab:my_label}
\end{table}

\subsection{Empirical Validation of Assumption}

In Theorem 4.5, we introduced the following assumption: $\epsilon^*(I_n) \leq 2 \delta^*(I_n), \forall I_n$ where $M = \frac{1}{N^2} \Sigma_{i,j=1}^{N} $ $k(x_i,x_j)$, $\delta^*(I_n)=\frac{1}{n} \Sigma_{i \in I_n} \int_{\mathcal{X}} k(x_i, x) d\mathbb{P}(x)$ and $\epsilon^*(I_n) = M -\frac{1}{n^2} \Sigma_{i,j \in I_n} k(x_i, x_j)$.
To verify the empirical satisfaction of the assumption, we compute and plot $\epsilon^*(I_n)$ and $2 \delta^*(I_n)$ for data sets used in our paper.
As computing all $\epsilon^*(I_n)$ and $\delta^*(I_n)$ for all $I_n$ is computationally infeasible, we compute the value of the leading principal submatrix by randomly permuting 100 times and present points with a minimum value of $2 \delta^*(I_n) - \epsilon^*(I_n)$.
The result, shown in Figure 6, supports the empirical satisfaction of the assumption.

\begin{figure}[th]
    \centering
    \includegraphics[width=0.24\linewidth]{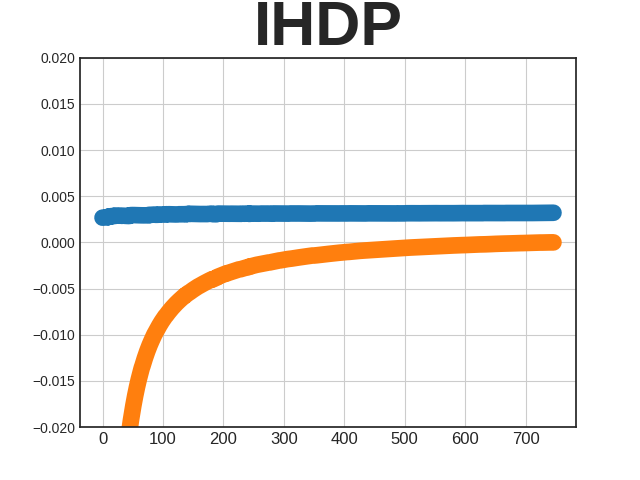}
    \includegraphics[width=0.24\linewidth]{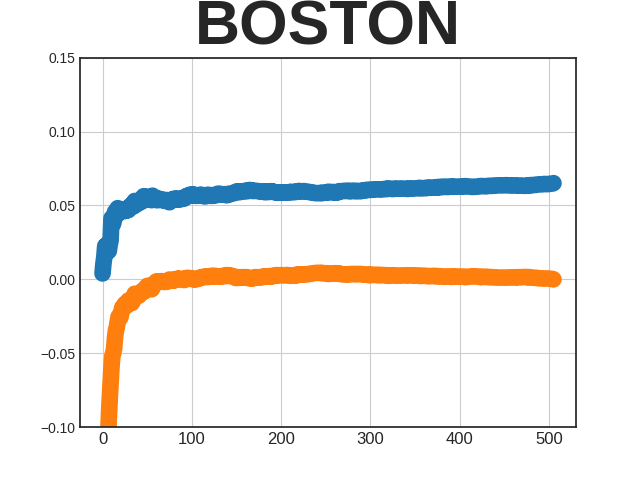}
    \includegraphics[width=0.24\linewidth]{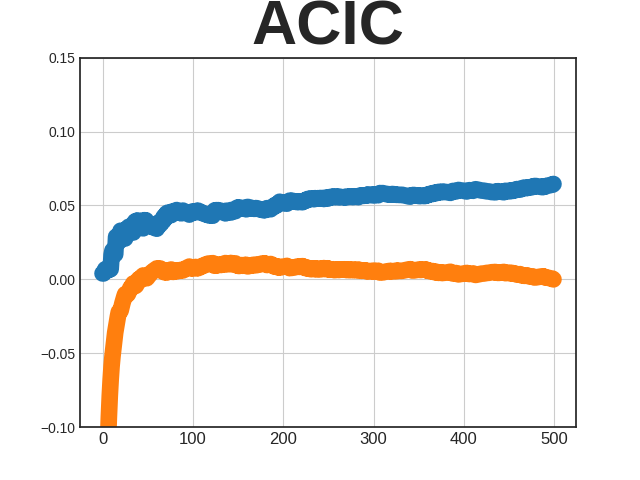}
    \includegraphics[width=0.24\linewidth]{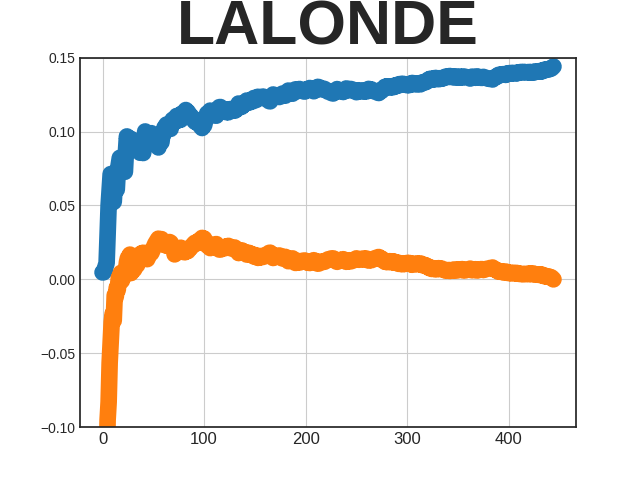}
    \caption{Empirical validation of the assumption ($\epsilon^*(I_n) \leq 2 \delta^*(I_n)$) from Theorem 4.5. The blue line is for $2 \delta^*(I_n)$, the orange line is for $\epsilon^*(I_n)$, and the $x$-axis is for $n$.}
\end{figure}

\section{Discussion \& Limitation}

ABC3 algorithm utilizes Gaussian process at its heart, hence the improvements pertaining to Gaussian process also happen in ABC3. 
For example, as multiple researchers attempted to extend the Gaussian Process to large data sets, e.g. \citet{hensman2013gaussian} and \citet{wang2019exact}, ABC3 can be extended to be more scalable. 
In Appendix E, we showcase a possible direction of extending ABC3 with sample-and-optimize approach, which outperforms the Naive policy on a large Weather data set with much better efficiency.
This demonstrates the potential of ABC3, which goes beyond its original design principle to maximize the learning efficiency of the expensive and limited data set for causal inference. 


Practitioners may consider using ABC3 just as an intelligent sampling policy, in conjunction to external regressors other than Gaussian process. 
The performance of the causal effect estimation when ABC3 is used with a different regressor, like a neural network or a random forest, is reported in Appendix F. 
According to the result, the best choice of regressor may depend not only on the sampling process design but also on the data set itself, which suggests another future research direction on designing optimal regressor given the data set and its sampling algorithm. 

Lastly, ABC3 may be used as a selective data set cleanser that learns to avoid sampling potentially toxic observation points. 
As shown in Figure 1, for Boston and ACIC data sets, ABC3 shows the best performance when observing only a portion of data set. 
The result may imply the existence of \textit{toxic} data points for CATE estimation, and ABC3 successfully avoids sampling those data points unless forced to sample all. 
We believe that more detailed analysis of this behavior deserves a separate future research as well. 

\section{Conclusion}

We present ABC3, an active learning based sampling policy for causal inference.
ABC3 minimizes the expected error of CATE estimation from a Bayesian perspective, without violating the key assumptions in causal inference.
Using maximum mean discrepancy, we prove that ABC3 minimizes the upper bound of imbalance between observed treatment and control groups.
Moreover, ABC3 theoretically minimizes the upper bound of type 1 error probability.
ABC3 empirically outperforms other active learning policies, and its theoretical properties as well as empirical robustness are also validated to give additional support to the general applicability of ABC3. 
We expect ABC3 and its extensions will deepen theoretical insights and general applicability of active learning on casual inference tasks. 

\section*{Acknowledgements}

This work was supported by the Ministry of Trade, Industry and Energy (MOTIE), Korea Institute for Advancement of Technology (KIAT) through the Materials and Parts Industry Technology Development Base Construction Project (P0022334) and National Research Foundation of Korea (NRF) grant funded by the Korea government (MSIT) (No. 2020R1G1A1102828).

\bibliography{aaai25}

\newpage

\end{document}


\begin{appendices}

This is a technical appendix for the manuscript: \textbf{ABC3: Active Bayesian Causal Inference with Cohn Criteria in Randomized Experiments}.

\section{Proofs}
\subsection{Proof on Theorem 4.1.}

To prove the theorem, we need a lemma.

\begin{lemma}
    For a random variable $X$ and two filtration $\mathcal{F}' \subset \mathcal{F}$, if $\mathbb{V} \left[ X | \mathcal{F} \right]$ is a known constant in $\mathcal{F}'$, then $\mathbb{V}\left[ \mathbb{E} \left[ X \right | \mathcal{F}] | \mathcal{F}' \right] = \mathbb{V} \left[ X | \mathcal{F}' \right] - \mathbb{V} \left[ X | \mathcal{F} \right]$
\end{lemma}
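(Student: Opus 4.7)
The plan is to recognize this as a direct consequence of the conditional law of total variance (tower rule for variance) applied to the nested filtrations $\mathcal{F}' \subset \mathcal{F}$. Specifically, for any integrable $X$ one has the identity
\[
\mathbb{V}\left[ X | \mathcal{F}' \right] = \mathbb{E}\left[ \mathbb{V}\left[ X | \mathcal{F} \right] | \mathcal{F}' \right] + \mathbb{V}\left[ \mathbb{E}\left[ X | \mathcal{F} \right] | \mathcal{F}' \right],
\]
which is the standard decomposition of variance into the expected ``within'' variance plus the variance of the ``between'' mean.

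Once this identity is in hand, I would invoke the hypothesis that $\mathbb{V}[X | \mathcal{F}]$ is a known constant in $\mathcal{F}'$, i.e.\ it is $\mathcal{F}'$-measurable. By the basic ``taking out what is known'' property of conditional expectation, this implies $\mathbb{E}[ \mathbb{V}[X|\mathcal{F}] | \mathcal{F}'] = \mathbb{V}[X|\mathcal{F}]$. Substituting this into the total-variance identity and rearranging immediately yields
\[
\mathbb{V}\left[ \mathbb{E}[X | \mathcal{F}] | \mathcal{F}' \right] = \mathbb{V}[X | \mathcal{F}'] - \mathbb{V}[X | \mathcal{F}],
\]
which is exactly the claim.

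The only step that requires care is justifying the conditional law of total variance for filtrations (rather than for conditioning on a single $\sigma$-algebra arising from a random variable). I would briefly derive it by expanding $\mathbb{V}[X|\mathcal{F}'] = \mathbb{E}[X^2 | \mathcal{F}'] - \mathbb{E}[X|\mathcal{F}']^2$, inserting $\mathbb{E}[X^2|\mathcal{F}'] = \mathbb{E}[\mathbb{E}[X^2|\mathcal{F}]|\mathcal{F}']$ via the tower property (valid since $\mathcal{F}' \subset \mathcal{F}$), writing $\mathbb{E}[X^2|\mathcal{F}] = \mathbb{V}[X|\mathcal{F}] + \mathbb{E}[X|\mathcal{F}]^2$, and collecting terms so that the two squared-mean terms combine into $\mathbb{V}[\mathbb{E}[X|\mathcal{F}]|\mathcal{F}']$. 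I expect this tower-rule verification to be the only mildly delicate part; the rest is a one-line substitution.
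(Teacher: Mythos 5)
Your proposal is correct and is essentially the same argument as the paper's: both reduce the claim to the conditional law of total variance $\mathbb{V}[X|\mathcal{F}'] = \mathbb{E}[\mathbb{V}[X|\mathcal{F}]|\mathcal{F}'] + \mathbb{V}[\mathbb{E}[X|\mathcal{F}]|\mathcal{F}']$ together with the observation that $\mathcal{F}'$-measurability of $\mathbb{V}[X|\mathcal{F}]$ collapses the first term to $\mathbb{V}[X|\mathcal{F}]$. The only cosmetic difference is that you verify the total-variance identity via second moments ($\mathbb{E}[X^2|\cdot]$ and the tower property), whereas the paper derives it by inserting $\pm\,\mathbb{E}[X|\mathcal{F}']$ into $(X-\mathbb{E}[X|\mathcal{F}])^2$ and killing the cross term.
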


\begin{proof}
    \begin{align*}
        \mathbb{V} &\left[ X | \mathcal{F} \right] = \mathbb{E} \left[ (X - \mathbb{E} \left[ X | \mathcal{F} \right])^2 | \mathcal{F} \right] \\
        &= \mathbb{E} \left[ (X - \mathbb{E} \left[ X | \mathcal{F}' \right] + \mathbb{E} \left[ X | \mathcal{F}' \right] - \mathbb{E} \left[ X | \mathcal{F} \right])^2 | \mathcal{F} \right] \\
        &= \mathbb{E} \left[ (X - \mathbb{E} \left[ X | \mathcal{F}' \right])^2 | \mathcal{F} \right] + 2 \mathbb{E} \left[ (X - \mathbb{E} \left[ X | \mathcal{F}' \right]) (\mathbb{E} \left[ X | \mathcal{F}' \right] - \mathbb{E} \left[ X | \mathcal{F} \right]) | \mathcal{F} \right] \\
        & + \mathbb{E} \left[ (\mathbb{E} \left[ X | \mathcal{F}' \right] - \mathbb{E} \left[ X | \mathcal{F} \right])^2 | \mathcal{F} \right] \\
        &= \mathbb{E} \left[ (X - \mathbb{E} \left[ X | \mathcal{F}' \right])^2 | \mathcal{F} \right] + 2 (\mathbb{E} \left[ X | \mathcal{F} \right] - \mathbb{E} \left[ X | \mathcal{F}' \right]) (\mathbb{E} \left[ X | \mathcal{F}' \right] - \mathbb{E} \left[ X | \mathcal{F} \right]) \\
        & + (\mathbb{E} \left[ X | \mathcal{F}' \right] - \mathbb{E} \left[ X | \mathcal{F} \right])^2 \\
        &= \mathbb{E} \left[ (X - \mathbb{E} \left[ X | \mathcal{F}' \right])^2 | \mathcal{F} \right] - (\mathbb{E} \left[ X | \mathcal{F}' \right] - \mathbb{E} \left[ X | \mathcal{F} \right])^2
    \end{align*}
    where the fourth equality holds as $\mathbb{E} \left[ X | \mathcal{F}' \right]$ and $\mathbb{E} \left[ X | \mathcal{F} \right]$ are already known at $\mathcal{F}$.
    By taking an expectation with respect to $\mathcal{F}'$, as $\mathbb{V} \left[ X | \mathcal{F} \right]$ is a known constant in $\mathcal{F}'$,
    \begin{align*}
        \mathbb{V} \left[ X | \mathcal{F} \right] &= \mathbb{E} \left[ (X - \mathbb{E} \left[ X | \mathcal{F}' \right])^2 | \mathcal{F}' \right] - \mathbb{E} \left[ (\mathbb{E} \left[ X | \mathcal{F}' \right] - \mathbb{E} \left[ X | \mathcal{F} \right])^2 | \mathcal{F}' \right] \\
        &= \mathbb{V} \left[ X | \mathcal{F}' \right] - \mathbb{V} \left[ \mathbb{E} \left[ X | \mathcal{F} \right] | \mathcal{F}' \right]
    \end{align*}
    by the tower property.
\end{proof}

\begin{theorem}
    Assume $|k(x, x')| < \infty$ and $|y^a_i| < \infty$ for all $x, x' \in \mathcal{X}$ and $a, i$, as a result, $\epsilon^{\Omega}_{PEHE}(\hat{CATE}_t) < \infty, \forall t$.
    Let our estimator $\hat{CATE}_t(x) = \hat{y}^1_t(x) - \hat{y}^0_t(x)$, where $\hat{y}^a_t(x)=\mathbb{E}_t\left[ Y^a(x) \right]$ is a mean posterior distribution of gaussian process $Y^a$ trained on data set $D_t^a$. 
    Assume two Gaussian processes $Y^1$ and $Y^0$ are independent.
    Then 
    \begin{align*}
     \text{arg min}_{x_{t+1},a_{t+1}} & \mathbb{E}_{t+1} \left[\epsilon_{PEHE}^{\Omega} (\hat{CATE}_{t+1})\right] \\
    & = \text{arg min}_{x_{t+1},a_{t+1}} \int_\mathcal{X} \mathbb{V}_{t+1} \left[ Y^1(x) \right] + \mathbb{V}_{t+1} \left[Y^0(x) \right] d\mathbb{P}(x)
    \end{align*}
\end{theorem}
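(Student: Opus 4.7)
The strategy is to unfold the definition of $\epsilon_{PEHE}^{\Omega}$, interpret $\mathbb{E}_{t+1}\bigl[\epsilon_{PEHE}^{\Omega}(\hat{CATE}_{t+1})\bigr]$ as an integrated Bayes risk, and then use independence of the two Gaussian processes to split the resulting CATE posterior variance into a sum of single-outcome posterior variances.

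First, I would write the PEHE in its standard Bayesian form $\epsilon_{PEHE}^{\Omega}(\hat{CATE}_{t+1}) = \int_{\mathcal{X}} \bigl(\hat{CATE}_{t+1}(x) - (Y^1(x) - Y^0(x))\bigr)^2 \, d\mathbb{P}(x)$ and apply Fubini to interchange $\mathbb{E}_{t+1}$ with the integral over $\mathcal{X}$. The swap is justified by the boundedness of the kernel $k$ and the outcomes $y_i^a$ assumed in the theorem, which together make $\epsilon_{PEHE}^{\Omega}$ almost surely finite and hence integrable for Fubini's theorem to apply termwise.

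Next, since $\hat{CATE}_{t+1}(x) = \mathbb{E}_{t+1}[Y^1(x)] - \mathbb{E}_{t+1}[Y^0(x)] = \mathbb{E}_{t+1}[Y^1(x) - Y^0(x)]$, the conditional expectation inside the integral collapses, by the definition of variance, to $\mathbb{V}_{t+1}[Y^1(x) - Y^0(x)]$. Independence of $Y^1$ and $Y^0$ is preserved after conditioning on $\mathcal{F}_{t+1}$, because the factual subset $D_{t+1}^1$ updates only $Y^1$ and $D_{t+1}^0$ only $Y^0$; therefore $\mathbb{V}_{t+1}[Y^1(x) - Y^0(x)] = \mathbb{V}_{t+1}[Y^1(x)] + \mathbb{V}_{t+1}[Y^0(x)]$, yielding precisely the integrand on the right-hand side.

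Finally, a Gaussian-process posterior variance depends only on the input locations of the observations and not on their realized values, so $\int_{\mathcal{X}} \mathbb{V}_{t+1}[Y^1(x)] + \mathbb{V}_{t+1}[Y^0(x)] \, d\mathbb{P}(x)$ is a deterministic function of $(x_{t+1}, a_{t+1})$ given $\mathcal{F}_t$; the same is then true of $\mathbb{E}_{t+1}\bigl[\epsilon_{PEHE}^{\Omega}(\hat{CATE}_{t+1})\bigr]$ by the equality just derived, and their arg mins over the next query coincide. The main conceptual step is the Bayes-risk identification in the second paragraph: after that the proof is essentially bookkeeping plus the GP-specific measurability property. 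The auxiliary lemma stated above is not strictly required for this theorem, but it packages a conditional law-of-total-variance identity that will be natural to invoke downstream when one wishes to re-express $\mathbb{V}_{t+1}$ through $\mathbb{V}_t$ for explicit Cohn-criterion computations.
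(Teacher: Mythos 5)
Your argument is structurally sound, but it proves the theorem for a different error functional than the one the paper uses, and the difference is exactly where the paper's key lemma enters. Here $\epsilon^{\Omega}_{PEHE}(\hat{CATE}_{t+1})=\int_{\mathcal{X}}(\hat{CATE}_{t+1}(x)-\hat{CATE}_{\Omega}(x))^2\,d\mathbb{P}(x)$, where $\hat{CATE}_{\Omega}(x)=\mathbb{E}_{\Omega}[Y^1(x)-Y^0(x)]$ is the posterior mean given the \emph{entire} pool $\Omega$ (that is what the superscript $\Omega$ signals), not the oracle quantity $Y^1(x)-Y^0(x)$ that you substituted. With your reference point the computation indeed collapses in one line to the exact identity $\mathbb{E}_{t+1}[\,\cdot\,]=\int_{\mathcal{X}}\mathbb{V}_{t+1}[Y^1(x)-Y^0(x)]\,d\mathbb{P}(x)$. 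With the paper's reference point, after Fubini and the bias--variance split the bias term still vanishes by the tower property (since $\mathbb{E}_{t+1}[\hat{CATE}_{\Omega}(x)]=\hat{CATE}_{t+1}(x)$), but what remains is $\int_{\mathcal{X}}\mathbb{V}_{t+1}\left[\mathbb{E}_{\Omega}[Y^1(x)-Y^0(x)]\right]d\mathbb{P}(x)$, and one needs the conditional law-of-total-variance identity $\mathbb{V}_{t+1}[\mathbb{E}_{\Omega}[Z]]=\mathbb{V}_{t+1}[Z]-\mathbb{V}_{\Omega}[Z]$ (valid because the GP posterior variance $\mathbb{V}_{\Omega}[Z]$ is a known constant, depending only on input locations) to rewrite this as $\int_{\mathcal{X}}\mathbb{V}_{t+1}[Y^1(x)-Y^0(x)]\,d\mathbb{P}(x)$ minus the query-independent constant $\int_{\mathcal{X}}\mathbb{V}_{\Omega}[Y^1(x)-Y^0(x)]\,d\mathbb{P}(x)$; only then do the arg mins coincide. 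So your closing remark that the auxiliary lemma ``is not strictly required for this theorem'' is backwards for the statement as the paper intends it: that lemma is precisely the bridge between the two objectives.

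The rest of your reasoning --- Fubini justified by the finiteness assumptions, preservation of independence of $Y^1$ and $Y^0$ after conditioning because each arm's data updates only its own process, and the observation that the GP posterior variance depends only on query locations so the objective is $\mathcal{F}_t$-measurable and the arg min is well defined before $y_{t+1}$ is observed --- is correct and in places more explicit than the paper, which does not spell out why the conditional independence or the measurability holds. If you replace the oracle target by $\hat{CATE}_{\Omega}$ and insert the law-of-total-variance step, your proof becomes the paper's proof.
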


\begin{proof}
    We can write our target quantity,
    \begin{align*}
        & \mathbb{E}_{t+1} \left[\epsilon_{PEHE}^{\Omega} (\hat{CATE}_{t+1})\right] \\
        & = \mathbb{E}_{t+1} \left[ \int_\mathcal{X} (\hat{CATE}_{t+1}(x) - \hat{CATE}_{\Omega}(x))^2 d\mathbb{P}(x) \right]
    \end{align*}
    Note
    \begin{align*}
        & \mathbb{E}_{t+1} \left[ \int_\mathcal{X} (\hat{CATE}_{t+1}(x) - \hat{CATE}_{\Omega}(x))^2 d\mathbb{P}(x)  \right] \\
        &= \int_\mathcal{X} \mathbb{E}_{t+1} \left[ (\hat{CATE}_{t+1}(x) - \hat{CATE}_{\Omega}(x))^2 \right] d\mathbb{P}(x) \text{ by Fubini}\\
        &= \int_\mathcal{X} \mathbb{E}_{t+1} \left[ \hat{CATE}_{t+1}(x) - \hat{CATE}_{\Omega}(x) \right]^2 + \mathbb{V}_{t+1} \left[ \hat{CATE}_{t+1}(x) - \hat{CATE}_{\Omega}(x) \right] d\mathbb{P}(x) \\
        &= \int_\mathcal{X} \mathbb{V}_{t+1} \left[ \hat{CATE}_{t+1}(x) - \hat{CATE}_{\Omega}(x) \right] d\mathbb{P}(x) \\
        &= \int_\mathcal{X} \mathbb{V}_{t+1} \left[ \hat{CATE}_{\Omega}(x) \right] d\mathbb{P}(x)
    \end{align*}
    The fourth line holds because $\mathbb{E}_{t+1} \left[ \hat{CATE}_{\Omega}(x) \right] = \mathbb{E}_{t+1} \left[ \hat{CATE}_{t+1}(x) \right]$ by the tower property.
    We can use Fubini Theorem in the second line as we assumed $\epsilon^{\Omega}_{PEHE}(\hat{CATE}_t) < \infty, \forall t$.
    Also, $\mathbb{V}_{t+1} \left[ \hat{CATE}_{t+1}(x) \right] = 0$ as $\hat{CATE}_{t+1}$ is fixed at $t+1$.
    
    By the property of the Gaussian process, we can compute $\mathbb{V}_{\Omega} \left[ Y^1(x) - Y^0(x) \right]$, i.e. it is a known constant at $t+1$. Then by applying Lemma A.1., 
    \begin{align*}
        \int_\mathcal{X} \mathbb{V}_{t+1} & \left[ \hat{CATE}_{\Omega}(x) \right] d\mathbb{P}(x) \\
        &= \int_\mathcal{X} \mathbb{V}_{t+1} \left[ \mathbb{E}_{\Omega} \left[ Y^1(x) - Y^0(x) \right] \right] d\mathbb{P}(x) \\
        &= \int_\mathcal{X} \mathbb{V}_{t+1} \left[ Y^1(x) - Y^0(x) \right] - \mathbb{V}_{\Omega} \left[ Y^1(x) - Y^0(x) \right] d\mathbb{P}(x)
    \end{align*}
    As our choice on $x_{t+1}$ and $a_{t+1}$ does not affect $\mathbb{V}_{\Omega} \left[ Y^1(x) - Y^0(x) \right]$, we obtain
    \begin{align*}
     \text{arg min}_{x_{t+1},a_{t+1}} & \mathbb{E}_{t+1} \left[\epsilon_{PEHE}^{\Omega} (\hat{CATE}_{t+1})\right] \\
    & = \text{arg min}_{x_{t+1},a_{t+1}} \int_\mathcal{X} \mathbb{V}_{t+1} \left[ Y^1(x) - Y^0(x) \right] d\mathbb{P}(x) \\
    & = \text{arg min}_{x_{t+1},a_{t+1}} \int_\mathcal{X} \mathbb{V}_{t+1} \left[ Y^1(x) \right] + \mathbb{V}_{t+1} \left[Y^0(x) \right] d\mathbb{P}(x)
    \end{align*}
    by the independence assumption between $Y^1$ and $Y^0$.
\end{proof}

\subsection{Proof on Proposition 4.2.}

To prove the proposition, we need a lemma.

\begin{lemma}
    \begin{align*}
        & \text{arg min}_{x_{t+1},a} \int_\mathcal{X} \mathbb{V}_{t+1} \left[ Y^1(x) \right] + \mathbb{V}_{t+1} \left[ Y^0(x) \right] d\mathbb{P}(x) \\
        & = \text{arg max}_{x_{t+1},a} \int_\mathcal{X} \mathbb{V}_{t} \left[ Y^a(x) \right] - \mathbb{V}_{t+1} \left[ Y^a(x) \right]  d\mathbb{P}(x)
    \end{align*}
\end{lemma}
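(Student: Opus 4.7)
The plan is to exploit the fact that selecting a design point $(x_{t+1}, a)$ only augments the dataset $D^a_t$ for arm $a$, leaving the Gaussian process for the other arm $1-a$ completely unchanged. So the first key observation I would record is that
\[
\mathbb{V}_{t+1}\!\left[Y^{1-a}(x)\right] = \mathbb{V}_{t}\!\left[Y^{1-a}(x)\right]
\]
pointwise in $x$, because the posterior for arm $1-a$ does not see any new data.

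Using this, I would rewrite the left-hand objective by splitting off the unchanged arm:
\[
\int_\mathcal{X} \mathbb{V}_{t+1}[Y^1(x)] + \mathbb{V}_{t+1}[Y^0(x)]\, d\mathbb{P}(x) \;=\; \int_\mathcal{X} \mathbb{V}_{t+1}[Y^a(x)]\, d\mathbb{P}(x) \;+\; \int_\mathcal{X} \mathbb{V}_{t}[Y^{1-a}(x)]\, d\mathbb{P}(x).
\]
Then I would add and subtract $\int_\mathcal{X} \mathbb{V}_t[Y^a(x)]\, d\mathbb{P}(x)$ to group terms as
\[
-\!\int_\mathcal{X} \bigl(\mathbb{V}_t[Y^a(x)] - \mathbb{V}_{t+1}[Y^a(x)]\bigr)\, d\mathbb{P}(x) \;+\; \int_\mathcal{X} \mathbb{V}_t[Y^1(x)] + \mathbb{V}_t[Y^0(x)]\, d\mathbb{P}(x).
\]

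The second integral depends only on the time-$t$ posterior, which is already fixed before the choice is made, so it is a constant with respect to $(x_{t+1}, a)$ and drops out of any \textbf{arg min}. Minimizing the remaining expression is equivalent to maximizing its negation, yielding the stated equality of arg sets. The whole argument is essentially bookkeeping; the only subtlety worth emphasizing in the write-up is the invariance of $\mathbb{V}_{t+1}[Y^{1-a}]$, since the independence of $Y^1$ and $Y^0$ used in Theorem~4.1 is what guarantees that an observation on arm $a$ leaves the posterior of arm $1-a$ untouched — there is no ``main obstacle'' beyond making this step explicit.
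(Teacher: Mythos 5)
Your proof is correct, and it rests on the same key fact as the paper's --- that querying arm $a$ leaves the posterior of arm $1-a$ untouched, so $\mathbb{V}_{t+1}[Y^{1-a}(x)] = \mathbb{V}_{t}[Y^{1-a}(x)]$ --- but you organize the argument differently and, frankly, more cleanly. The paper proves the lemma by writing out the defining inequalities of the arg min and checking two cases separately (comparing the candidate $(\tilde{x},a)$ against alternatives on the opposite arm and against alternatives on the same arm), then massaging each inequality into the corresponding arg max inequality. You instead establish a single identity: the minimization objective equals
\[
\int_\mathcal{X} \mathbb{V}_t\!\left[Y^1(x)\right] + \mathbb{V}_t\!\left[Y^0(x)\right] d\mathbb{P}(x) \;-\; \int_\mathcal{X} \left(\mathbb{V}_t\!\left[Y^a(x)\right] - \mathbb{V}_{t+1}\!\left[Y^a(x)\right]\right) d\mathbb{P}(x),
\]
i.e.\ a choice-independent constant minus the variance-reduction gain, from which the equality of arg sets is immediate with no case analysis. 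This buys uniformity (both ``same arm, different point'' and ``different arm'' comparisons are handled at once) and makes explicit a constant that the paper's proof only implicitly cancels. One small point worth tightening in your write-up: the invariance of the untreated arm's posterior follows most directly from the modeling choice that each $Y^a$ is a separate Gaussian process trained only on $D^a_t$ (with posterior variance depending only on input locations, not observed outputs); the independence assumption is what justifies that modeling choice, so your attribution is right in spirit but could be stated one step more precisely.
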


\begin{proof}
    $x_{t+1}=\tilde{x}, a_{t+1}=a$ holds, if
    \begin{align}
        & = \int_\mathcal{X} \mathbb{V}_{\tilde{t+1}} \left[ Y^a(x) \right] + \mathbb{V}_{t} \left[ Y^{|1-a|}(x) \right] d\mathbb{P}(x) \\
        & \leq \int_\mathcal{X} \mathbb{V}_{\bar{t+1}} \left[ Y^{|1-a|}(x) \right] + \mathbb{V}_{t} \left[ Y^a(x) \right] d\mathbb{P}(x)
    \end{align}
    and
    \begin{align}
        & = \int_\mathcal{X} \mathbb{V}_{\tilde{t+1}} \left[ Y^a(x) \right] + \mathbb{V}_{t} \left[ Y^{|1-a|}(x) \right] d\mathbb{P}(x) \\
        & \leq \int_\mathcal{X} \mathbb{V}_{\bar{t+1}} \left[ Y^a(x) \right] + \mathbb{V}_{t} \left[ Y^{|1-a|}(x) \right] d\mathbb{P}(x)
    \end{align}
    for all $\bar{x}$, where $\mathbb{V}_{\tilde{t+1}} = \mathbb{V} \left[ \cdot | \mathcal{F}_{\tilde{t+1}} \right], \mathbb{V}_{\bar{t+1}} = \mathbb{V} \left[ \cdot | \mathcal{F}_{\bar{t+1}} \right], \mathcal{F}_{\tilde{t+1}} = \mathcal{F}_t \bigcup \{\tilde{x}\}, \mathcal{F}_{\bar{t+1}} = \mathcal{F}_t \bigcup \{\bar{x}\}$, i.e. filtration assuming the future observation $x_{t+1}=\tilde{x}$ and $\bar{x}$. 
    Proposition holds as
    \begin{align*}
        \text{(2)} \iff & \int_\mathcal{X} \mathbb{V}_{t} \left[ Y^a(x) \right] - \mathbb{V}_{\tilde{t+1}} \left[ Y^a(x) \right] d\mathbb{P}(x) \\
         \geq & \int_\mathcal{X} \mathbb{V}_{t} \left[ Y^{|1-a|}(x) \right] - \mathbb{V}_{\bar{t+1}} \left[ Y^{|1-a|}(x) \right] d\mathbb{P}(x)
    \end{align*}
    and
    \begin{align*}
        \text{(4)} \iff & \int_\mathcal{X} \mathbb{V}_{\tilde{t+1}} \left[ Y^a(x) \right] d\mathbb{P}(x) \leq \int_\mathcal{X}  \mathbb{V}_{\bar{t+1}} \left[ Y^a(x) \right] d\mathbb{P}(x) \\
        \iff & \int_\mathcal{X} - \mathbb{V}_{\tilde{t+1}} \left[ Y^a(x) \right] d\mathbb{P}(x) \geq \int_\mathcal{X}  - \mathbb{V}_{\bar{t+1}} \left[ Y^a(x) \right] d\mathbb{P}(x) \\
        \iff & \int_\mathcal{X} \mathbb{V}_{t} \left[ Y^a(x) \right] - \mathbb{V}_{\tilde{t+1}} \left[ Y^a(x) \right] d\mathbb{P}(x) \\
        & \geq \int_\mathcal{X} \mathbb{V}_{t} \left[ Y^a(x) \right] - \mathbb{V}_{\bar{t+1}} \left[ Y^a(x) \right] d\mathbb{P}(x)
    \end{align*}
\end{proof}

Now we can prove the proposition

\begin{proposition}
    \begin{align*}
         & \text{arg min}_{x_{t+1},a_{t+1}} \int_\mathcal{X} \mathbb{V}_{t+1} \left[ Y^1(x) \right] + \mathbb{V}_{t+1} \left[ Y^0(x) \right] d\mathbb{P}(x) = \\
        & \text{arg max}_{x_{t+1},a} \frac{ \int_\mathcal{X} \left[ (\mathbf{\tilde{k}}^a_{t+1})^T \left[ \mathbb{K}^a_t + \sigma^2_\epsilon \mathbf{I} \right]^{-1} \mathbf{k}^a_{t,*}(x) - k(x_{t+1}, x)\right]^2 d\mathbb{P}(x) }{k(x_{t+1},x_{t+1}) + \sigma^2_\epsilon - (\mathbf{\tilde{k}}^a_{t+1})^T  \left[ \mathbb{K}^a_t + \sigma^2_\epsilon \mathbf{I} \right]^{-1} \mathbf{\tilde{k}}^a_{t+1}} \\
    \end{align*}
    where $\mathbf{\tilde{k}}^a_{t+1} = \left[k(x_i, x_{t+1})\right]_{i \in I^a_t}$
\end{proposition}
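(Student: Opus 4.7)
The plan is to start from Lemma A.3 (just proved above), which reduces the minimization problem to
\[
\text{arg max}_{x_{t+1},a} \int_\mathcal{X} \mathbb{V}_{t} \left[ Y^a(x) \right] - \mathbb{V}_{t+1} \left[ Y^a(x) \right] d\mathbb{P}(x),
\]
so it suffices to show pointwise that the integrand equals the quantity
\[
\frac{\left[ (\mathbf{\tilde{k}}^a_{t+1})^T \left[ \mathbb{K}^a_t + \sigma^2_\epsilon \mathbf{I} \right]^{-1} \mathbf{k}^a_{t,*}(x) - k(x_{t+1}, x)\right]^2}{k(x_{t+1},x_{t+1}) + \sigma^2_\epsilon - (\mathbf{\tilde{k}}^a_{t+1})^T \left[ \mathbb{K}^a_t + \sigma^2_\epsilon \mathbf{I} \right]^{-1} \mathbf{\tilde{k}}^a_{t+1}}
\]
for each $x$, after which the integral and the $\operatorname{arg\,max}$ statement follow immediately.

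First I would invoke the standard Gaussian process posterior variance formula
\[
\mathbb{V}_t\!\left[Y^a(x)\right] = k(x,x) - \mathbf{k}^a_{t,*}(x)^T \left[\mathbb{K}^a_t + \sigma^2_\epsilon \mathbf{I}\right]^{-1} \mathbf{k}^a_{t,*}(x),
\]
and write the analogous expression for $\mathbb{V}_{t+1}[Y^a(x)]$ using the augmented kernel matrix and the augmented cross-covariance vector $(\mathbf{k}^a_{t,*}(x)^T, k(x_{t+1},x))^T$. The goal is to express the difference $\mathbb{V}_t - \mathbb{V}_{t+1}$ in terms of quantities indexed only by $t$ and by the new point $x_{t+1}$.

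The key algebraic step is block-matrix inversion (Schur complement): partition
\[
\mathbb{K}^a_{t+1} + \sigma^2_\epsilon \mathbf{I} = \begin{pmatrix} \mathbb{K}^a_t + \sigma^2_\epsilon \mathbf{I} & \mathbf{\tilde{k}}^a_{t+1} \\ (\mathbf{\tilde{k}}^a_{t+1})^T & k(x_{t+1},x_{t+1}) + \sigma^2_\epsilon \end{pmatrix},
\]
and write its inverse via the Schur complement $s = k(x_{t+1},x_{t+1}) + \sigma^2_\epsilon - (\mathbf{\tilde{k}}^a_{t+1})^T [\mathbb{K}^a_t + \sigma^2_\epsilon \mathbf{I}]^{-1} \mathbf{\tilde{k}}^a_{t+1}$. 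Substituting this block inverse into the augmented quadratic form and expanding, the cross terms combine into a perfect square, yielding
\[
\mathbb{V}_t[Y^a(x)] - \mathbb{V}_{t+1}[Y^a(x)] = \frac{\bigl((\mathbf{\tilde{k}}^a_{t+1})^T [\mathbb{K}^a_t + \sigma^2_\epsilon \mathbf{I}]^{-1} \mathbf{k}^a_{t,*}(x) - k(x_{t+1},x)\bigr)^2}{s}.
\]
Finally, since the denominator $s$ does not depend on the integration variable $x$, I can pull it out of the integral, which produces exactly the objective in the proposition.

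The main obstacle is the bookkeeping in the block-inversion expansion: one must carefully identify the $A^{-1}vv^T A^{-1}$, $A^{-1}v$, and scalar blocks, multiply on both sides by the augmented cross-covariance vector, and check that the cross terms $-\tfrac{2 k(x_{t+1},x)}{s} (\mathbf{\tilde{k}}^a_{t+1})^T A^{-1} \mathbf{k}^a_{t,*}(x)$ combine with $\tfrac{1}{s}((\mathbf{\tilde{k}}^a_{t+1})^T A^{-1} \mathbf{k}^a_{t,*}(x))^2$ and $\tfrac{k(x_{t+1},x)^2}{s}$ to form the claimed square; otherwise the reduction is mechanical.
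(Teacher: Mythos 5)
Your proposal matches the paper's proof essentially step for step: it invokes Lemma A.3 to reduce the problem to maximizing $\int_\mathcal{X} \mathbb{V}_t[Y^a(x)] - \mathbb{V}_{t+1}[Y^a(x)]\, d\mathbb{P}(x)$, then uses the Gaussian process posterior variance formula together with the block inversion of $\mathbb{K}^a_{t+1} + \sigma^2_\epsilon \mathbf{I}$ via the Schur complement (the paper's ``bordering method'') to obtain the same closed-form ratio, pulling the $x$-independent denominator out of the integral. The approach and the key algebraic identity are identical to the paper's, so the proposal is correct.
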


\begin{proof}
    As shown in Lemma A.3., $\text{arg min}_{x_{t+1},a_{t+1}} \int_\mathcal{X} \mathbb{V}_{t+1} \left[ Y^1(x) \right] + \mathbb{V}_{t+1} \left[ Y^0(x) \right] d\mathbb{P}(x) = \text{arg max}_{x_{t+1},a} \mathbb{V}_{t} \left[ Y^a(x) \right] - \mathbb{V}_{t+1} \left[ Y^a(x) \right]$.
    
    Note,
    \begin{align*}
        & \mathbb{V}_{t} \left[ Y^a(x) \right] - \mathbb{V}_{t+1} \left[ Y^a(x) \right] \\ 
        & =  (\mathbf{k}^a_{t+1, *}(x))^T \left[ \mathbb{K}^a_{t+1} + \sigma^2_\epsilon \mathbf{I} \right]^{-1} \mathbf{k}^a_{t+1,*}(x) - (\mathbf{k}^a_{t, *}(x))^T \left[ \mathbb{K}^a_t + \sigma^2_\epsilon \mathbf{I} \right]^{-1} \mathbf{k}^a_{t,*}(x)
    \end{align*}
    and
    \begin{align*}
        \mathbb{K}^a_{t+1} &= 
        \begin{bmatrix}
        \mathbb{K}^a_{t} & \mathbf{\tilde{k}}^a_{t+1} \\
        (\mathbf{\tilde{k}}^a_{t+1})^T & k(x_{t+1}, x_{t+1})
        \end{bmatrix} \\
        \mathbf{k}^a_{t+1, *}(x) &= 
        \begin{bmatrix}
        \mathbf{k}^a_{t, *}(x) & k(x_{t+1}, x)
        \end{bmatrix} \\
    \end{align*}
    By utilizing the bordering method \citep{faddeev1981lin}, we can compute the inverse matrix.
    \begin{equation*}
        (\mathbb{K}^a_{t+1} + \sigma^2_{\epsilon} \mathbf{I})^{-1} =
        \begin{bmatrix}
            (i) & - \frac{(\mathbb{K}^a_t + \sigma^2_{\epsilon} \mathbf{I})^{-1} \tilde{\mathbf{k}}^a_{t+1}}{(ii)} \\
            & \\  
            - \frac{(\tilde{\mathbf{k}}^a_{t+1})^T (\mathbb{K}^a_t + \sigma^2_{\epsilon} \mathbf{I})^{-1}}{(ii)} &  \frac{1}{(ii)} \\
            \label{eq:bordering}
        \end{bmatrix}
    \end{equation*}
    where
    \begin{align*}
        (i) &= (\mathbb{K}^a_t + \sigma^2_{\epsilon} \mathbf{I})^{-1} + \frac{(\mathbb{K}^a_t + \sigma^2_{\epsilon} \mathbf{I})^{-1} \tilde{\mathbf{k}}^a_{t+1} (\tilde{\mathbf{k}}^a_{t+1})^T (\mathbb{K}^a_t + \sigma^2_{\epsilon} \mathbf{I})^{-1}}{(ii)} \\
        (ii) &= k(x_{t+1}, x_{t+1}) + \sigma^2_{\epsilon} - (\tilde{\mathbf{k}}^a_{t+1})^T (\mathbb{K}^a_t + \sigma^2_{\epsilon} \mathbf{I})^{-1} \tilde{\mathbf{k}}^a_{t+1}
    \end{align*}
    By multiplying $\mathbf{k}^a_{t+1, *}$ to both sides, we obtain
    \begin{align*}
        (\mathbf{k}^a_{t+1, *}(x))^T & \left[ \mathbb{K}^a_{t+1} + \sigma^2_\epsilon \mathbf{I} \right]^{-1} \mathbf{k}^a_{t+1,*}(x) \\
        &= \frac{ \left[ (\mathbf{\tilde{k}}^a_{t+1})^T \left[ \mathbb{K}^a_t + \sigma^2_\epsilon \mathbf{I} \right]^{-1} \mathbf{k}^a_{t,*}(x) - k(x_{t+1}, x)\right]^2 }{k(x_{t+1},x_{t+1}) + \sigma^2_\epsilon - (\mathbf{\tilde{k}}^a_{t+1})^T  \left[ \mathbb{K}^a_t + \sigma^2_\epsilon \mathbf{I} \right]^{-1} \mathbf{\tilde{k}}^a_{t+1}} \\
        & + (\mathbf{k}^a_{t, *}(x))^T \left[ \mathbb{K}^a_t + \sigma^2_\epsilon \mathbf{I} \right]^{-1} \mathbf{k}^a_{t,*}(x)
    \end{align*}
    So $\mathbb{V}_{t} \left[ Y^a(x) \right] - \mathbb{V}_{t+1} \left[ Y^a(x) \right] = \frac{ \left[ (\mathbf{\tilde{k}}^a_{t+1})^T \left[ \mathbb{K}^a_t + \sigma^2_\epsilon \mathbf{I} \right]^{-1} \mathbf{k}^a_{t,*}(x) - k(x_{t+1}, x)\right]^2 }{k(x_{t+1},x_{t+1}) + \sigma^2_\epsilon - (\mathbf{\tilde{k}}^a_{t+1})^T  \left[ \mathbb{K}^a_t + \sigma^2_\epsilon \mathbf{I} \right]^{-1} \mathbf{\tilde{k}}^a_{t+1}}$
    so the proposition holds.
\end{proof}

\subsection{Proof on Theorem 4.5}

\begin{theorem} 
    Assume $0 < |k(x, x')| < \infty$ and $|y^a_i| < \infty$ for all $x, x' \in \mathcal{X}$ and $a, i$. 
    Let $\lambda^*$ be a maximum eigenvalue of $\mathbb{K}_{\Omega}$, i.e. covariance matrix of whole covariates.
    Let $M = \frac{1}{N^2} \Sigma_{i,j=1}^{N} k(x_i,x_j)$.
    Define functions $\delta^*(I_n)=\frac{1}{n} \Sigma_{i \in I_n} \int_{\mathcal{X}} k(x_i, x) d\mathbb{P}(x)$, and $\epsilon^*(I_n) = M - \frac{1}{n^2} \Sigma_{i,j \in I_n} k(x_i, x_j)$, where $I_n \subset \{1,...,N\}$ is an n-element index set.
    Assume  $\epsilon^*(I_n) \leq 2 \delta^*(I_n), \forall I_n$.
    Then
    \begin{align*}
        & MMD(\mathbb{P}^1_t, \mathbb{P}^0_t, \mathcal{F})^2 \leq  4 \frac{\lambda^*}{|I^1_t|} + 4 \frac{\lambda^*}{|I^0_t|} + 2 \int_\mathcal{X} \mathbb{V}_t \left[ Y^1(x) \right] + \mathbb{V}_t \left[ Y^0(x) \right] d\mathbb{P}(x)
    \end{align*}
\end{theorem}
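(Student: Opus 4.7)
The plan is to work through the kernel mean embedding picture for MMD. Since $MMD(\mathbb{P}^1_t, \mathbb{P}^0_t, \mathcal{F})^2 = \|\mu_{\mathbb{P}^1_t} - \mu_{\mathbb{P}^0_t}\|_\mathcal{F}^2$ with $\mu_\mathbb{Q}(\cdot) = \int k(\cdot, x)\, d\mathbb{Q}(x)$ the kernel mean embedding, my first move is to anchor against $\mu_\mathbb{P}$ and apply $\|a - b\|^2 \leq 2\|a\|^2 + 2\|b\|^2$ to obtain
\[
MMD(\mathbb{P}^1_t, \mathbb{P}^0_t, \mathcal{F})^2 \;\leq\; 2 \|\mu_{\mathbb{P}^1_t} - \mu_\mathbb{P}\|_\mathcal{F}^2 + 2 \|\mu_{\mathbb{P}^0_t} - \mu_\mathbb{P}\|_\mathcal{F}^2.
\]
After that, it suffices to prove the per-group bound $\|\mu_{\mathbb{P}^a_t} - \mu_\mathbb{P}\|_\mathcal{F}^2 \leq 2 \lambda^* / |I^a_t|$ for $a \in \{0,1\}$; the non-negative integrated-variance term can then be tacked on for free.

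For the per-group bound I would expand $\|\mu_{\mathbb{P}^a_t} - \mu_\mathbb{P}\|^2 = \|\mu_{\mathbb{P}^a_t}\|^2 - 2 \langle \mu_{\mathbb{P}^a_t}, \mu_\mathbb{P} \rangle + \|\mu_\mathbb{P}\|^2$ and match the pieces to the paper's definitions: $\|\mu_\mathbb{P}\|^2 = M$, $\langle \mu_{\mathbb{P}^a_t}, \mu_\mathbb{P} \rangle = \delta^*(I^a_t)$, and $\|\mu_{\mathbb{P}^a_t}\|^2 = M - \epsilon^*(I^a_t)$, which together give $2M - \epsilon^*(I^a_t) - 2\delta^*(I^a_t)$. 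The assumption $\epsilon^*(I_n) \leq 2 \delta^*(I_n)$ is exactly the hypothesis that lets me swap $-2\delta^*$ for $-\epsilon^*$, collapsing the right-hand side to $2(M - \epsilon^*(I^a_t)) = 2\|\mu_{\mathbb{P}^a_t}\|_\mathcal{F}^2$.

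I would then bound $\|\mu_{\mathbb{P}^a_t}\|_\mathcal{F}^2$ by a Rayleigh-quotient argument. Writing $\mu_{\mathbb{P}^a_t} = \sum_{j=1}^{N} w_j\, k(\cdot, x_j)$ with $w_j = \mathbf{1}[j \in I^a_t] / |I^a_t|$, the squared norm is $\mathbf{w}^\top \mathbb{K}_\Omega \mathbf{w} \leq \lambda^* \|\mathbf{w}\|_2^2 = \lambda^* / |I^a_t|$, since $\lambda^*$ is the top eigenvalue of the full-pool kernel matrix and $\|\mathbf{w}\|_2^2 = |I^a_t| \cdot (1/|I^a_t|)^2$. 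Combining with the earlier steps gives $\|\mu_{\mathbb{P}^a_t} - \mu_\mathbb{P}\|^2 \leq 2\lambda^*/|I^a_t|$ and hence $MMD^2 \leq 4\lambda^*/|I^1_t| + 4\lambda^*/|I^0_t|$; adding the non-negative slack $2\int (\mathbb{V}_t[Y^1(x)] + \mathbb{V}_t[Y^0(x)])\, d\mathbb{P}(x)$ on the right-hand side yields the stated inequality.

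The main obstacle I anticipate is spotting the role of the assumption $\epsilon^*(I_n) \leq 2 \delta^*(I_n)$: it looks like a regularity condition, but it is precisely equivalent to $\|\mu_{\mathbb{P}^a_t} - \mu_\mathbb{P}\|^2 \leq 2 \|\mu_{\mathbb{P}^a_t}\|^2$ and is exactly what lets me eliminate the mixed cross term $-2\delta^*$ in favour of the more tractable $-\epsilon^*$, so that the remaining bound is a pure self-inner-product to which a single eigenvalue inequality applies. Once that reduction is identified, the integrated-variance term is added only to make the right-hand side line up with the ABC3 acquisition criterion used elsewhere in the paper.
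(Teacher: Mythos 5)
Your proof is correct, but it takes a genuinely different route from the paper's. You expand $\|\mu_{\mathbb{P}^a_t}-\mu_{\mathbb{P}}\|^2 = 2M - \epsilon^*(I^a_t) - 2\delta^*(I^a_t)$ directly, use the assumption to replace $-2\delta^*$ by $-\epsilon^*$, and land on $2\|\mu_{\mathbb{P}^a_t}\|^2 \le 2\lambda^*/|I^a_t|$ via a Rayleigh quotient on $\mathbb{K}_\Omega$ (which also neatly sidesteps the eigenvalue-interlacing step the paper needs for $\lambda_{\max}(\mathbb{K}_t)\le\lambda_{\max}(\mathbb{K}_\Omega)$). The paper instead introduces a Bayesian-quadrature weighted empirical measure $\tilde{\mathbb{P}}_t$ with weights $(\mathbb{K}_t)^{-1}\tilde{\mathbf{k}}_t$, splits the squared MMD into three pieces by adding and subtracting $\tilde{\mathbf{k}}_t^T(\mathbb{K}_t)^{-1}\tilde{\mathbf{k}}_t$, identifies the middle piece with $\mathbb{V}_t\left[\int Y(x)\,d\mathbb{P}(x)\right]$, and bounds it by $\int \mathbb{V}_t\left[Y(x)\right]d\mathbb{P}(x)$ via Jensen and Fubini; the assumption is then used to bound the remaining cross piece by the first piece. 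The upshot is that your argument is shorter and yields a strictly stronger conclusion, $MMD^2 \le 4\lambda^*/|I^1_t| + 4\lambda^*/|I^0_t|$, with the variance term appended only as non-negative slack; the paper's longer decomposition is what makes the posterior-variance term appear \emph{intrinsically} in the bound, which is the conceptual point of the theorem (tying ABC3's variance-minimization objective to covariate balance). Both establish the stated inequality; yours buys simplicity and tightness, the paper's buys the interpretation it wants to advertise.
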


\begin{proof}
    Note, $MMD(\mathbb{P}^1_t, \mathbb{P}^0_t, \mathcal{F})^2 \leq 2 MMD(\mathbb{P}^1_t, \mathbb{P}, \mathcal{F})^2 + 2MMD(\mathbb{P}^0_t, \mathbb{P}, \mathcal{F})^2$, as
    \begin{align*}
        & MMD(\mathbb{P}^1_t, \mathbb{P}^0_t, \mathcal{F})^2 \\
        & \leq \left[ MMD(\mathbb{P}^1_t, \mathbb{P}, \mathcal{F}) + MMD(\mathbb{P}^0_t, \mathbb{P}, \mathcal{F}) \right]^2 \text{ by Triangle Inequality} \\
        & =  MMD(\mathbb{P}^1_t, \mathbb{P}, \mathcal{F})^2 + MMD(\mathbb{P}^0_t, \mathbb{P}, \mathcal{F})^2 + 2 MMD(\mathbb{P}^1_t, \mathbb{P}, \mathcal{F}) MMD(\mathbb{P}^0_t, \mathbb{P}, \mathcal{F}) \\
        & \leq 2 MMD(\mathbb{P}^1_t, \mathbb{P}, \mathcal{F})^2 + 2MMD(\mathbb{P}^0_t, \mathbb{P}, \mathcal{F})^2
    \end{align*}
    We focus on one $MMD(\mathbb{P}^a_t, \mathbb{P}, \mathcal{F})$ as we can extend the arguments to $MMD(\mathbb{P}^{|1-a|}_t, \mathbb{P}, \mathcal{F})$.
    For notational convenience, we omit $a$ if trivial.
    
    Define a weighted empirical distribution $\tilde{\mathbb{P}_t}=\Sigma_{i \in I_t} \omega_{t,i} \delta_{x_i}$, where the weight vector $\omega_t = (\mathbb{K}_t)^{-1} \mathbf{\tilde{k}}_t$, $\mathbf{\tilde{k}}_t = \left[ \int k(x_i, x) d\mathbb{P}(x) \right]_{i \in I_t}$.
    As
    \begin{align}
        MMD(\mathbb{P}_t, \mathbb{P}, \mathcal{F})^2 &= ||\mu_{\mathbb{P}_t} - \mu_{\mathbb{P}}||^2 \\
        &= \bar{\mathbf{1}}^T \mathbb{K}_t \bar{\mathbf{1}} + \int_\mathcal{X} \int_\mathcal{X} k(x, x') d\mathbb{P}(x) d\mathbb{P}(x') - 2 \mathbf{\tilde{k}}_t^T \bar{\mathbf{1}} \\
        &= \bar{\mathbf{1}}^T \mathbb{K}_t \bar{\mathbf{1}} \\
        &+ \int_\mathcal{X} \int_\mathcal{X} k(x, x') d\mathbb{P}(x) d\mathbb{P}(x') - \mathbf{\tilde{k}}_t^T (\mathbb{K}_t)^{-1} \mathbf{\tilde{k}}_t \\
        &+ \mathbf{\tilde{k}}_t^T (\mathbb{K}_t)^{-1} \mathbf{\tilde{k}}_t - 2 \mathbf{\tilde{k}}_t^T \bar{\mathbf{1}}
    \end{align}
    where $\bar{\mathbf{1}} = \frac{1}{|I_t|} \mathbf{1}_{|I_t|}$ is a vector computing the mean.
    As $\mathbb{K}_t$ is a symmetric and positive-definite matrix, we obtain,
    \begin{equation}
        \text{(7)} \leq \lambda_{max}(\mathbb{K}_t) \bar{\mathbf{1}}^T \bar{\mathbf{1}} = \frac{1}{|I_t|} \lambda_{max}(\mathbb{K}_t) \leq \frac{1}{|I_t|} \lambda_{max}(\mathbb{K}_{\Omega}) = \frac{\lambda^*}{|I_t|}
    \end{equation}
    where $\lambda_{max}$ is a maximum eigenvalue of a matrix.
    
    From the Bayesian quadrature literature, it is known that $\text{(8)} = \mathbb{V}_t \left[ \int Y(x) d\mathbb{P}(x) \right]$
    (See Section 6.2. of \citet{kanagawa2018gaussian}).
    Since
    \begin{align}
        \mathbb{V}_t \left[\int Y(x) d\mathbb{P}(x) \right] &= \mathbb{E}_t \left[ \left[ \int Y(x) d\mathbb{P}(x) - \mathbb{E}_t \left[ \int Y(x) d\mathbb{P}(x) \right] \right]^2 \right] \\
        &= \mathbb{E}_t \left[ \left[ \int Y(x) - \mathbb{E}_t \left[ Y(x) \right] d\mathbb{P}(x) \right]^2 \right] \text{ by Fubini} \\
        &\leq \mathbb{E}_t \left[ \int \left[  Y(x) - \mathbb{E}_t \left[ Y(x) \right] \right]^2 d\mathbb{P}(x) \right] \text{ by Jensen's inequality} \\
        &= \int \mathbb{E}_t \left[ \left[  Y(x) - \mathbb{E}_t \left[ Y(x) \right] \right]^2 \right] d\mathbb{P}(x) \text{ by Fubini} \\
        &= \int \mathbb{V}_t \left[ Y(x) \right] d\mathbb{P}(x)
    \end{align}
    We can apply the Fubini theorem to (12) because,
    \begin{align*}
        \int \mathbb{E}_t \left[ Y(x) \right] d\mathbb{P}(x) & = \int \mathbf{k}_{t,*}(x)^T (\mathbb{K}_t)^{-1} \mathbf{y}_t d\mathbb{P}(x) \\
        & = \frac{1}{N} \Sigma_{i=1}^{N} \mathbf{k}_{t,*}(x_i)^T (\mathbb{K}_t)^{-1} \mathbf{y}_t < \infty
    \end{align*}
    by assumptions. 
    Likewise for (14), 
    \begin{align*}
        \int \mathbb{E}_t \left[ \left[  Y(x) - \mathbb{E}_t \left[ Y(x) \right] \right]^2 \right] & d\mathbb{P}(x) = \int \mathbb{V}_t \left[ Y(x) \right] d\mathbb{P}(x) \\
        & = \int k(x, x) - \mathbf{k}_{t,*}(x)^T (\mathbb{K}_t)^{-1} \mathbf{k}_{t,*}(x) d\mathbb{P}(x) \\
        & = \frac{1}{N} \Sigma_{i=1}^{N} k(x_i, x_i) - \mathbf{k}_{t,*}(x_i)^T (\mathbb{K}_t)^{-1} \mathbf{k}_{t,*}(x_i) < \infty
    \end{align*}
    So
    \begin{equation}
        \text{(8)} \leq \int \mathbb{V}_t \left[ Y(x) \right] d\mathbb{P}(x)
    \end{equation}
    As $MMD(\mathbb{P}, \tilde{\mathbb{P}}_t, \mathcal{F})^2 = \int_\mathcal{X} \int_\mathcal{X} k(x, x') d\mathbb{P}(x) d\mathbb{P}(x') - \mathbf{\tilde{k}}_t^T (\mathbb{K}_t)^{-1} \mathbf{\tilde{k}}_t \geq 0$, we have $M = \frac{1}{N^2} \Sigma_{i,j=1}^{N} k(x_i,x_j) = \int_\mathcal{X} \int_\mathcal{X} k(x, x') d\mathbb{P}(x) d\mathbb{P}(x') \geq \mathbf{\tilde{k}}_t^T (\mathbb{K}_t)^{-1} \mathbf{\tilde{k}}_t$.
    By definition, $2 \mathbf{\tilde{k}}_t^T \bar{\mathbf{1}} = 2 \delta^*(I_t)$ and $\bar{\mathbf{1}}^T \mathbb{K}_t \bar{\mathbf{1}} = M - \epsilon^*(I_t)$.
    By our assumption,
    \begin{align}
        \epsilon^*(I_t) \leq 2 \delta^*(I_t) & \iff M - \epsilon^*(I_t) \geq M - 2 \delta^*(I_t) \\
        & \iff \bar{\mathbf{1}}^T \mathbb{K}_t \bar{\mathbf{1}} \geq M - 2 \mathbf{\tilde{k}}_t^T \bar{\mathbf{1}} \\
        & \implies \bar{\mathbf{1}}^T \mathbb{K}_t \bar{\mathbf{1}} \geq \mathbf{\tilde{k}}_t^T (\mathbb{K}_t)^{-1} \mathbf{\tilde{k}}_t - 2 \mathbf{\tilde{k}}_t^T \bar{\mathbf{1}} \\
        & \implies \frac{\lambda^*}{|I_t|} \geq \mathbf{\tilde{k}}_t^T (\mathbb{K}_t)^{-1} \mathbf{\tilde{k}}_t - 2 \mathbf{\tilde{k}}_t^T \bar{\mathbf{1}} \text{ by (10)}
    \end{align}
    By combining the results from (10), (16), and (20), we obtain
    \begin{equation*}
        MMD(\mathbb{P}^a_t, \mathbb{P}, \mathcal{F})^2 \leq 2 \frac{\lambda^*}{|I^a_t|} + \int \mathbb{V}_t \left[ Y^a(x) \right] d\mathbb{P}(x)
    \end{equation*}
    and as a result,
    \begin{equation*}
        MMD(\mathbb{P}^1_t, \mathbb{P}^0_t, \mathcal{F})^2 \leq 4 \frac{\lambda^*}{|I^1_t|} + 4 \frac{\lambda^*}{|I^0_t|} + 2 \int \mathbb{V}_t \left[ Y^1(x) \right] + \mathbb{V}_t \left[ Y^0(x) \right] d\mathbb{P}(x)
    \end{equation*}
\end{proof}

\subsection{Proof on Theorem 4.7.}

\begin{theorem}
    Under Fisher's Sharp null hypothesis, $\text{arg min}_{x_{t+1}, a_{t+1}} \int_\mathcal{X} \mathbb{V}_{t+1} \left[ Y^1(x) \right] + \mathbb{V}_{t+1} \left[ Y^0(x) \right] d\mathbb{P}(x)$ also minimizes the upper bound of the $\int_{\mathcal{X}} \mathbb{P}_{t+1} \left[ \text{Type 1 Error}(x) \right] d\mathbb{P}(x)$
\end{theorem}

\begin{proof}
    Note,
    \begin{equation*}
        \mathbb{P}_{t+1}\left[ \text{Type 1 Error}(x) \right] =  \mathbb{P}_{t+1}\left[ |Y^1(x) - Y^0(x)| > \alpha \right]\\
    \end{equation*}
    By the Markov inequality,
    \begin{align*}
        & \int_\mathcal{X} \mathbb{P}_{t+1}\left[ |Y^1(x) - Y^0(x)| > \alpha \right] d\mathbb{P}(x) \\
        & = \int_\mathcal{X} \mathbb{P}_{t+1}\left[ (Y^1(x) - Y^0(x))^2 > \alpha^2 \right] d\mathbb{P}(x) \\
        & \leq \int_\mathcal{X} \frac{\mathbb{E}_{t+1} \left[ (Y^1(x) - Y^0(x))^2 \right]}{\alpha^2} d\mathbb{P}(x) \\
        & = \int_\mathcal{X} \frac{\mathbb{V}_{t+1} \left[ (Y^1(x) - Y^0(x)) \right] + \mathbb{E}_{t+1} \left[ (Y^1(x) - Y^0(x)) \right]^2}{\alpha^2} d\mathbb{P}(x) \\
        & \propto \int_\mathcal{X} \mathbb{V}_{t+1} \left[ Y^1(x) \right] + \left[ Y^0(x) \right] + \hat{CATE}_{t+1}(x)^2 d\mathbb{P}(x) \\
        & = \int_\mathcal{X} \mathbb{V}_{t+1} \left[ Y^1(x) \right] + \mathbb{V}_{t+1} \left[ Y^0(x) \right] + ( \hat{CATE}_{t+1}(x) - \hat{CATE}_{\Omega}(x) )^2 d\mathbb{P}(x) \\
        & = \int_\mathcal{X} \left[ \mathbb{V}_{t+1} \left[ Y^1(x) \right] + \mathbb{V}_{t+1} \left[ Y^0(x) \right] \right] + \mathbb{E}_{t+1} \left[ ( \hat{CATE}_{t+1}(x) - \hat{CATE}_{\Omega}(x) )^2 \right] d\mathbb{P}(x) \\
        & = \int_\mathcal{X} \left[ \mathbb{V}_{t+1} \left[ Y^1(x) \right] + \mathbb{V}_{t+1} \left[ Y^0(x) \right] \right] d\mathbb{P}(x) + \mathbb{E}_{t+1} \left[ \epsilon^{\Omega}_{PEHE}(\hat{CATE}_{t+1}) \right]
    \end{align*}
    where the last three lines hold as $\hat{CATE}_{\Omega}(x)=0, \forall x$ under the null hypothesis, and $\hat{CATE}_{t+1}(x) - \hat{CATE}_{\Omega}(x)$ is known at $t+1$.
    The last line holds by applying the Fubini Theorem with our assumption that $\epsilon^{\Omega}_{PEHE}(\hat{CATE}_t) < \infty, \forall t$.
    As shown in Theorem 4.1., both terms are our minimization target.
\end{proof}

\section{Analysis on the Assumption of Theorem 4.5.}
    
Recall the key assumption in Theorem 4.5. was $\epsilon^*(I_n) \leq 2 \delta^*(I_n), \forall I_n$ where $M = \frac{1}{N^2} \Sigma_{i,j=1}^{N} $ $k(x_i,x_j)$, $\delta^*(I_n)=\frac{1}{n} \Sigma_{i \in I_n} \int_{\mathcal{X}} k(x_i, x) d\mathbb{P}(x)$ and $\epsilon^*(I_n) = M -\frac{1}{n^2}$\\
$ \Sigma_{i,j \in I_n} k(x_i, x_j)$.
The assumption is equivalent to $M - 2 \delta^*(I_n) \leq \frac{1}{n^2} \Sigma_{i,j \in I_n} k(x_i, x_j)$.
Assume the kernel is a non-negative valued kernel with maximum value 1, which is achieved when $x=x'$ (e.g. RBF and Matern kernels, used in our paper).

Then $\text{inf}_{I_n} \frac{1}{n^2} \Sigma_{i,j \in I_n} k(x_i, x_j)$ is lower bounded by $\frac{1}{n}$.
The lower bound is achieved when the covariates are (roughly speaking) extremely and absolutely isolated in the kernel space, i.e. $k(x_i,x_j) = 0, \forall i \neq j$.
The gap between the line and $\text{inf}_{I_n} \frac{1}{n^2} \Sigma_{i,j \in I_n} k(x_i, x_j)$ represents the absolute uniformness of the covariates.
Meanwhile, $M - 2 \text{inf}_{I_n} \delta^*(I_n)$ is lower bounded by the straight line connecting $(0, M)$ and $(N, -M)$.
The lower bound is achieved when the covariates are (again, roughly speaking) extremely uniformly distributed, i.e. $\int k(x_i,x) d\mathbb{P}(x) = \int k(x_j,x) d\mathbb{P}(x), \forall i, j$.
The gap between the line and $M - 2 \text{inf}_{I_n} \delta^*(I_n)$ represents the relative isolatedness of the covariates.

So the assumption may not hold if the covariates are absolutely and relatively isolated.
However, as shown in Section 5.5, this assumption holds for general data sets with a large margin.
Figure 1 visualizes the condition.

\begin{figure*}[h]
    \centering
    \includegraphics[width=0.8\linewidth]{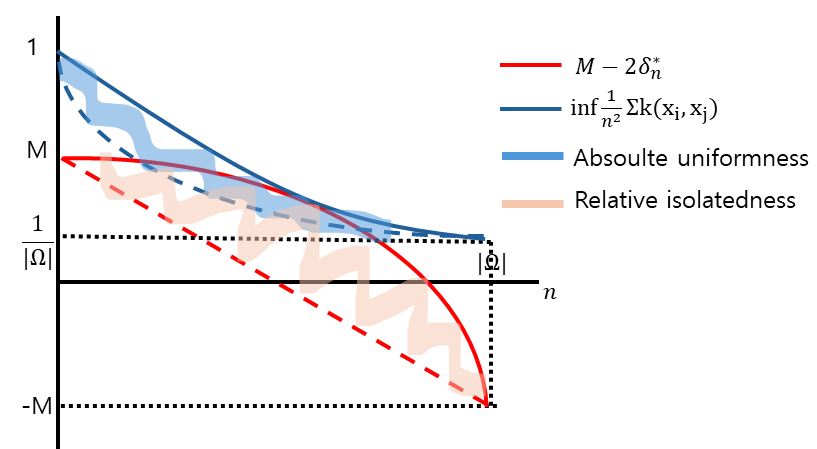}
    \caption{Intuitive visualization of the assumption used in Theorem 4.5.}
\end{figure*}



\section{Data Set \& Experimental Setting}

We present more detailed explanations of utilized data sets.

\textbf{IHDP}: (\citet{brooks1992effects} and \citet{hill2011bayesian}): This study measures the cognitive development of low-birth-weight and premature infants after intensive high-quality child care and home visits. It consists of 747 samples with 25 covariates describing the children and parents with simulated potential outcomes.

\textbf{Boston} \citep{harrison197881hedonic}: This data set measures the median house prices in the Boston area. It consists of 506 samples with 13 covariates. Following \citet{addanki2022sample}, we set $Y^0=Y^1$ to simulate the null hypothesis circumstance (i.e. $CATE(x)=0, \forall x$).

\textbf{ACIC} \citep{gruber2019acic}: We utilize the first test data set of the Atlantic Causal Inference Conference 2019 Data Challenge data set. It consists of 500 samples with 22 covariates. We report the results for the other 7 test data sets on Appendix G.

\textbf{Lalonde} \citep{lalonde1986evaluating}: This data set measures the effectiveness of a job training program on each individual's earnings. It consists of 506 samples with 10 covariates. As in Boston, we simulate the null hypothesis circumstance.

For the experiment, we use a machine with AMD Ryzen 9 5900X 12-Core Processor CPU and NVIDIA RTX 3090 GPU.

\section{Standard Deviation of Measured $\epsilon_{PEHE}$}

We present the standard deviation of measured $\epsilon_{PEHE}$.

\begin{figure*}[h]
    \centering
    \includegraphics[width=0.4\linewidth]{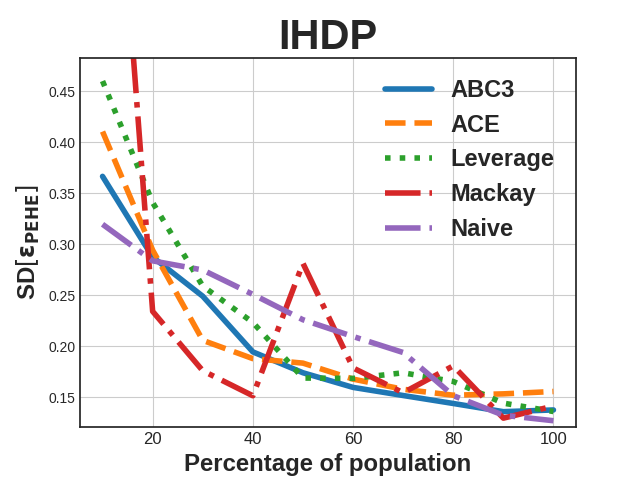}
    \includegraphics[width=0.4\linewidth]{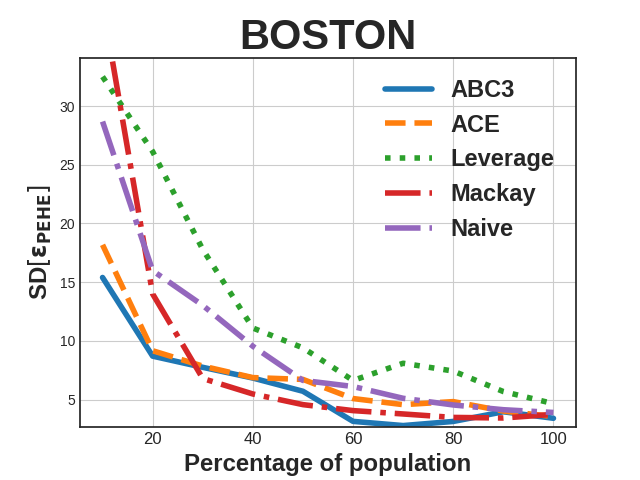}
    \includegraphics[width=0.4\linewidth]{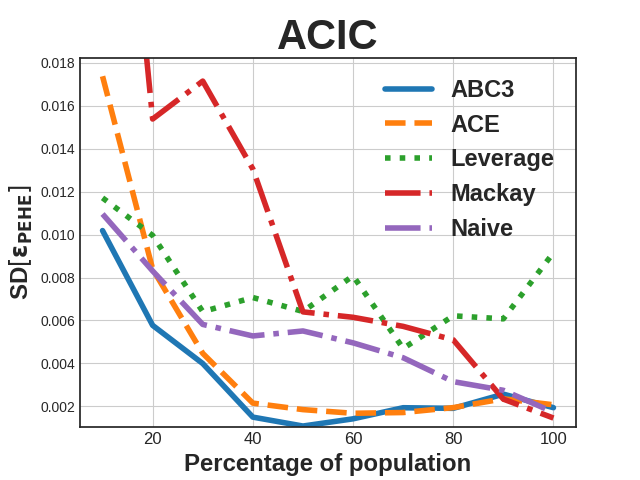}
    \includegraphics[width=0.4\linewidth]{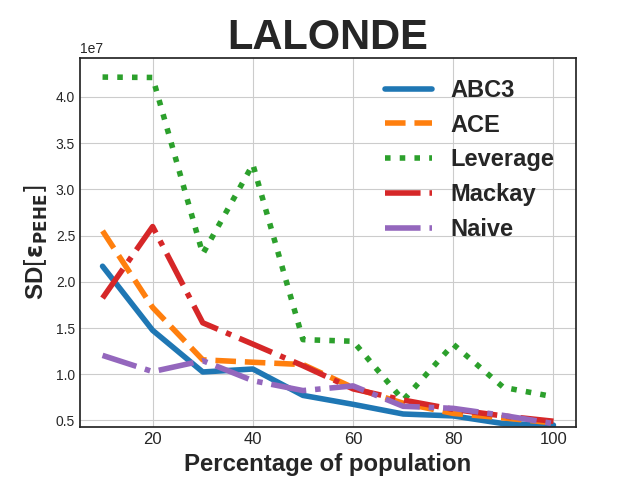}
    \caption{Standard deviation of $\epsilon_{PEHE}$.}
\end{figure*}

\section{Extending ABC3 to Large Data Set}

Though the majority of randomized experimental data is relatively small by nature, practitioners may want to apply ABC3 to the huge number of covariates.
However, computing our target quantity (Equation (1) in the main paper) for whole unseen covariates is computationally infeasible.

To bypass the problem, we showcase the extension of ABC3, which utilizes sampling and optimization techniques.
It samples N items among whole subjects and observed outcomes and maximizes the sampling version of our target quantity with optimization techniques.
We use L-BFGS as our optimizer, and uniform sampling.
We present the policy in greater detail in Algorithm 1.

To verify the empirical applicability, we use a large \textbf{Weather} data set \citep{nobert2016weather}, consisting of 100k weather information in Szeged, where we can not apply original ABC3 and other active learning policies because variance computation is computationally infeasible.
We should predict visibility with other weather-related covariates, e.g. temperature, humidity, or wind speed, utilizing only 0.1$\sim$1\% of data.
As it is not a randomized experimental data set, we set $Y^1=Y^0$.
We run 10 experiments with different train-test splits with a ratio of 9:1.
We compare ABC3 with \textbf{Naive} and \textbf{Sample} policies, where the latter does not perform optimization, but computes our target quantity within a set of sampled unseen covariates.
The result is in Figure 2.

\begin{algorithm}[h]
\caption{Extension of ABC3}
\textbf{Input}: Current time step $t$, sampling number $n$, whole covariates set $X_{\Omega}$, covariates distribution $\mathbb{P}$, previous observations $X^1_t$ and $X^0_t$, maximizing optimizer $\mathcal{O}$, tolerance $\epsilon$, kernel $k$, noise parameter $\sigma_\epsilon$\\
\textbf{Output}: $x_{t+1}, a_{t+1}$
\begin{algorithmic}[1] 
\STATE Set an initial value $x^\mathcal{O}_0=\text{feature-wise mean}(X_{\Omega} \setminus (X^1_t \bigcup X^0_t))$
\STATE Set an initial tolerance $t_0=\infty$, $i=0$
\WHILE{$t_i > \epsilon$}
\STATE Sample $n$ elements $X_n \subset X_{\Omega}$ and $\{I^a_t\}_n \subset I^a_t$ for $a \in \{0,1\}$
\STATE Compute $\mathbf{\tilde{k}}^a_{t+1}, \mathbb{K}^a_t, \mathbf{k}^a_{t,*}$ assuming $x_{t+1}= x^\mathcal{O}_i$ and  $I^a_t=\{I^a_t\}_n$ for $a \in \{0,1\}$
\STATE $x^\mathcal{O}_{i+1}=\mathcal{O}_{x_{t+1}} \left[ \text{max}_a \frac{ \int_{X_n} \left[ (\mathbf{\tilde{k}}^a_{t+1})^T \left[ \mathbb{K}^a_t + \sigma^2_\epsilon \mathbf{I} \right]^{-1} \mathbf{k}^a_{t,*}(x) - k(x_{t+1}, x)\right]^2 d\mathbb{P}(x) }{k(x_{t+1},x_{t+1}) + \sigma^2_\epsilon - (\mathbf{\tilde{k}}^a_{t+1})^T  \left[ \mathbb{K}^a_t + \sigma^2_\epsilon \mathbf{I} \right]^{-1} \mathbf{\tilde{k}}^a_{t+1}} \right]$
\STATE $t_{i+1} = ||x^\mathcal{O}_{i+1} - x^\mathcal{O}_i||$
\STATE $i = i+1$
\ENDWHILE
\STATE $x_{t+1} = \text{arg max}_{x \in X_{\Omega} \setminus (X^1_t \bigcup X^0_t)} k(x, x^\mathcal{O}_i)$
\STATE $a_{t+1} = \text{arg max}_{a} \left[ \frac{ \int_{X_n} \left[ (\mathbf{\tilde{k}}^a_{t+1})^T \left[ \mathbb{K}^a_t + \sigma^2_\epsilon \mathbf{I} \right]^{-1} \mathbf{k}^a_{t,*}(x) - k(x_{t+1}, x)\right]^2 d\mathbb{P}(x) }{k(x_{t+1},x_{t+1}) + \sigma^2_\epsilon - (\mathbf{\tilde{k}}^a_{t+1})^T  \left[ \mathbb{K}^a_t + \sigma^2_\epsilon \mathbf{I} \right]^{-1} \mathbf{\tilde{k}}^a_{t+1}} \right]$
\STATE \textbf{return} $x_{t+1}, a_{t+1}$
\end{algorithmic}
\end{algorithm}

\begin{figure}[h]
    \centering
    \includegraphics[width=0.4\linewidth]{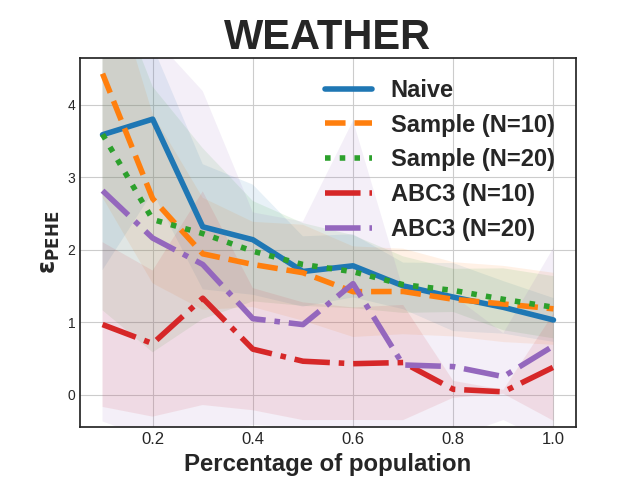}
    \caption{Mean and standard deviation of $\epsilon_{PEHE}$ on large Weather data set. $N$ is the number of utilized samples.}
\end{figure}

ABC3 substantially outperforms other policies despite small sample sizes.
Especially, the gap between ABC3 and Sample policy shows the importance of the optimization step.
Meanwhile, the performances of Sample and Naive policies are indistinguishable.
The result shows the extensibility of ABC3 to large data sets when equipped with suitable techniques.





\section{Plugging-in Other Regressor}

In our paper, we draw ABC3 by assuming our data follows the Gaussian process.
However, we can flexibly change the regressors trained on observed data sets, while maintaining ABC3 policy.
It would be beneficial if we achieve lower error by utilizing more sophisticated regressors, like neural networks.
We present $\epsilon_{PEHE}$ for different regressors.
We test Gaussian process (\textbf{GP}), random forest (\textbf{RF}, \citet{breiman2001random}), kernel support vector machine (\textbf{SVM}, \citet{cristianini2002support}), and 3 layered neural network with hidden size 100 (\textbf{NN}).
The result is in Figure 3.

\begin{figure}[h]
    \centering
    \includegraphics[width=0.4\linewidth]{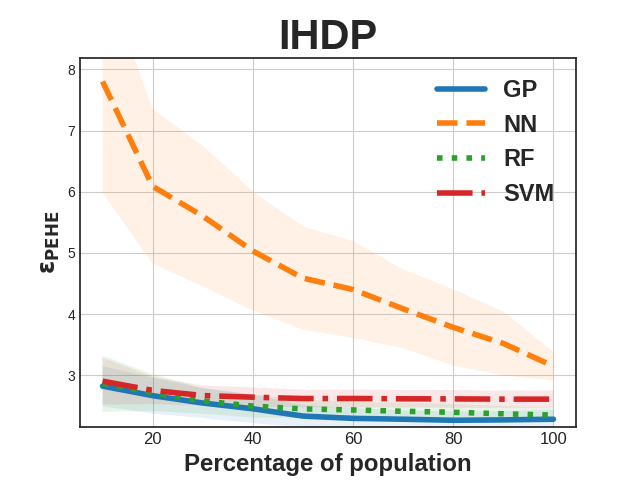}
    \includegraphics[width=0.4\linewidth]{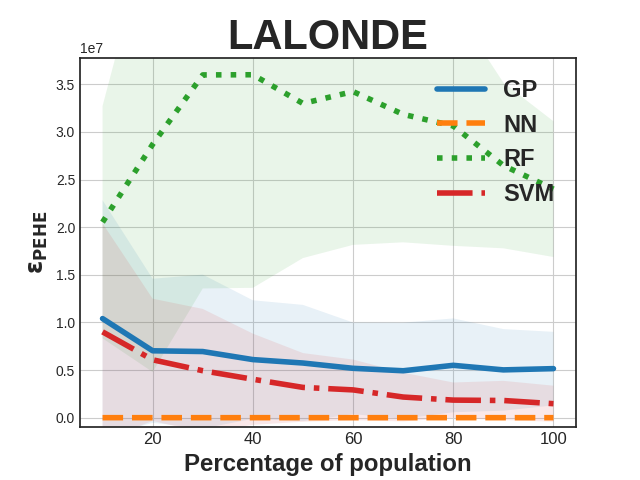}
    \caption{$\epsilon_{PEHE}$ for different regressors.}
\end{figure}

Regressors show different trends depending on data sets.
For IHDP, NN significantly deteriorates compared to other models.
SVM underperforms GP and RF for IHDP.
However, for Lalonde, NN significantly outperforms the others.
Also, SVM outperforms GP, while RF shows the worst performance.

The result shows that we can obtain better estimation with a more appropriate regressor, while it can also significantly deteriorate.
We recommend practitioners to check the suitability of regressors before applying them.

\section{Results for ACIC Data Sets}

\begin{figure*}[h]
    \centering
    \includegraphics[width=0.3\linewidth]{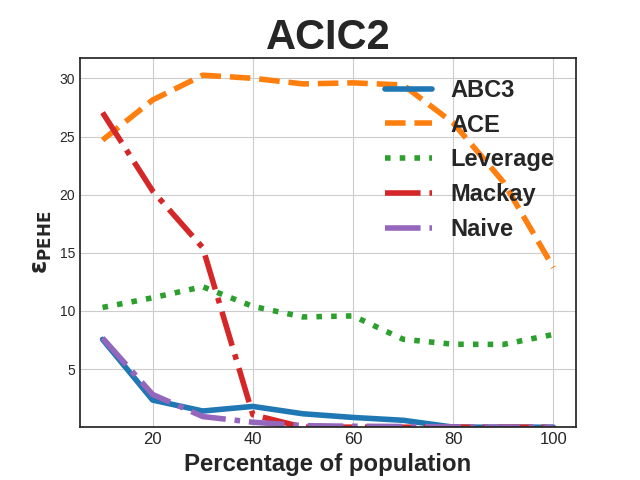}
    \includegraphics[width=0.3\linewidth]{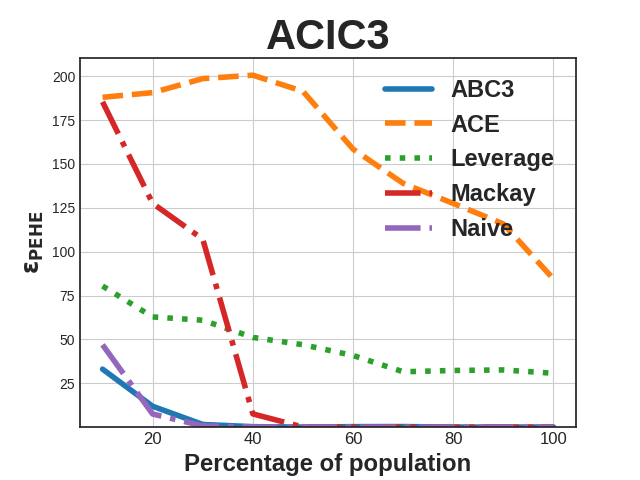}
    \includegraphics[width=0.3\linewidth]{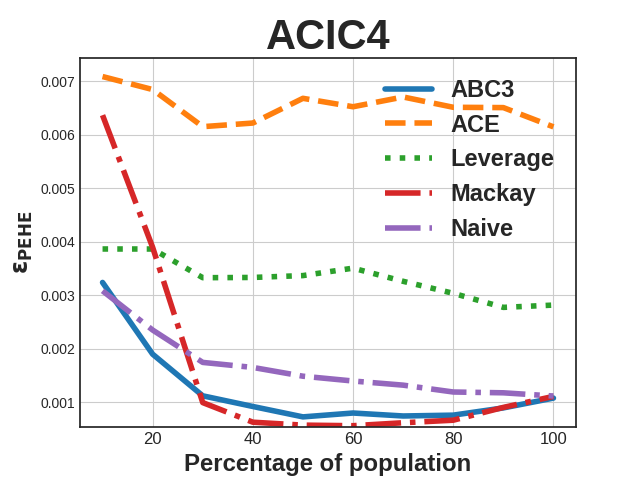}
    \caption{$\epsilon_{PEHE}$ for all ACIC test data sets (cont.).}
\end{figure*}

\begin{figure*}[h]
    \centering
    \includegraphics[width=0.245\linewidth]{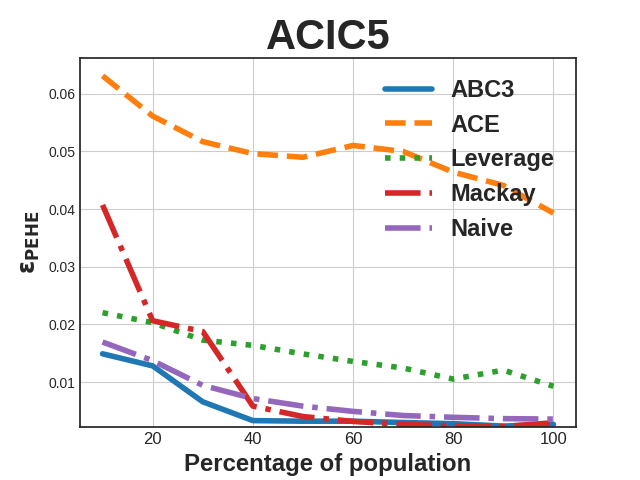}
    \includegraphics[width=0.245\linewidth]{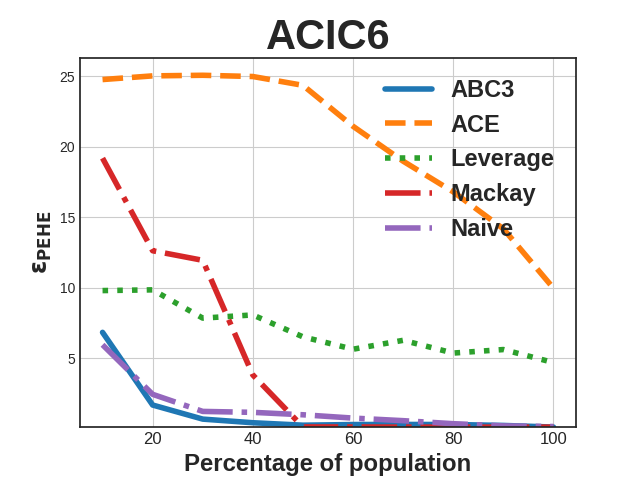}
    \includegraphics[width=0.245\linewidth]{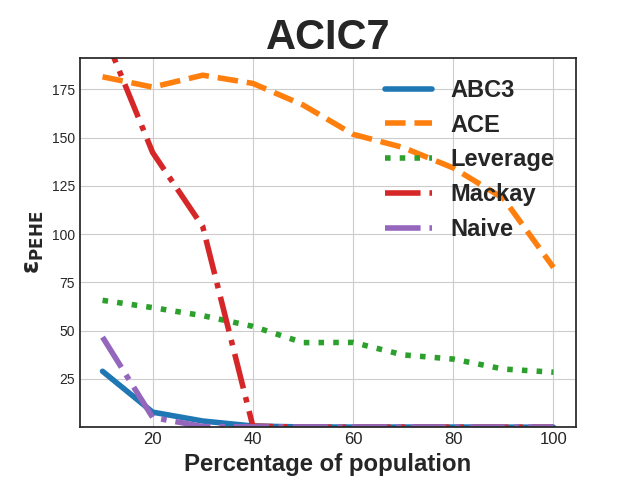}
    \includegraphics[width=0.245\linewidth]{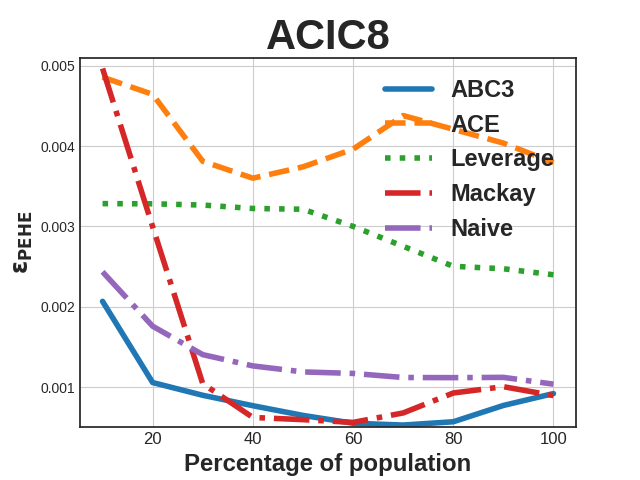}
    \caption{$\epsilon_{PEHE}$ for all ACIC test data sets.}
\end{figure*}

\end{appendices}

\bibliography{aaai25}
